\documentclass[runningheads]{llncs}

\usepackage{eccv}
\usepackage{eccvabbrv}

\usepackage{graphicx}
\usepackage{booktabs}
\usepackage[accsupp]{axessibility}
\usepackage{url}
\usepackage{amsmath}
\usepackage[mathcal]{eucal}
\usepackage{bm}
\usepackage{multirow}
\usepackage{makecell}
\usepackage{pgfplots}
\usepackage{pgfplotstable}
\usepackage{geometry}
\usepackage{xcolor}
\usepackage{caption}
\usepackage[ruled,noend,linesnumbered]{algorithm2e}
\usepackage{adjustbox}
\usepackage{colortbl,xcolor}
\usepackage{hyperref}
\usepackage{orcidlink}
\usepackage{cancel}

\pgfplotsset{compat=1.18}

\newcommand{\et}[2]{${#1}^{\pm{#2}}$}
\newcommand{\etb}[2]{$\mathbf{{#1}}^{\pm{#2}}$}
\newcommand{\ets}[2]{$\underline{{#1}}^{\pm{#2}}$}

\newtheorem{prop}{Proposition}

\definecolor{TableSectionGray}{gray}{0.93}
\newcommand{\sectionrow}[1]{\rowcolor{TableSectionGray}\multicolumn{10}{c}{\bfseries #1}\\}

\begin{document}

\title{Multi-scale Coarse-to-fine Modeling for \\ Test-time Human Motion Control}

\author{Nhat Le\inst{1}\and
Daochang Liu\inst{1}\and
Anh Nguyen\inst{2}\and
Ajmal Mian\inst{1}}

\authorrunning{N.~Le et al.}

\institute{The University of Western Australia, Crawley WA 6009, Australia \and
The University of Liverpool, Liverpool L69 3DR, United Kingdom}

\maketitle

\begin{abstract}
\vspace{-4mm}
We present MSCoT, a multi-scale, coarse-to-fine model for test-time human motion synthesis and control. Unlike recent approaches that rely on multiple iterative denoising/~token-prediction steps, or modules tailored for specific control signals, MSCoT discretizes motion into a multi-scale hierarchical representation and predicts the entire token sequence at each temporal scale in a coarse-to-fine fashion. Building on this coarse-to-fine paradigm, we propose an efficient multi-scale token guidance strategy that overcomes the challenge of discrete sampling and steers the token distribution towards the control goals, allowing for fast and flexible control. To address the limitations of a discrete codebook, a lightweight token refiner further adds continuous residuals to the discrete token embeddings and allows differentiable test-time refinement optimization to ensure precise alignment with the control objectives. MSCoT is able to produce quality motions, consistent with the control constraints, while offering substantially faster sampling than diffusion-based approaches. Experiments on popular benchmarks demonstrate state-of-the-art controllable text-to-motion generation performance of MSCoT over existing baselines, with better motion quality (48\% FID improvement), higher control accuracy (-61\% avg error), and $10\times$ faster inference speed on HumanML3D.
\end{abstract}
\vspace{-5mm}
\section{Introduction}
\vspace{-2mm}
\label{sec:intro}

We study the problem of \emph{test-time, training-free controllability} of human motion, a task fundamental to many applications, ranging from animation, filmmaking, AR/VR, to robotics and automation~\cite{plappert2016kitml,holden2016motion_cnn,peng2018deepmimic}. Recently, text-conditioned human motion generation has attracted significant attention since natural language offers a semantically rich and intuitive interface for describing high-level motion intent~\cite{tevet2023mdm,zhang2024motiondiffuse,zhang2023t2m_gpt,jiang2023motiongpt,zhang2024motiongpt,guo2024momask,yuan2024mogents,lu2025scamo}. However, text is inherently ambiguous and cannot specify exact motion path or geometry~\cite{karunratanakul2023gmd,feng2024chatpose,guo2025snapmogen,li2025lamp}. Consequently, text-only models may produce motions that are physically invalid or violate real-world constraints such as obstacle collision. Therefore, controlling human motion in test-time to follow such constraints is critical in real-world applications.

\begin{figure*}[t]
    \centering
    \includegraphics[width=\linewidth]{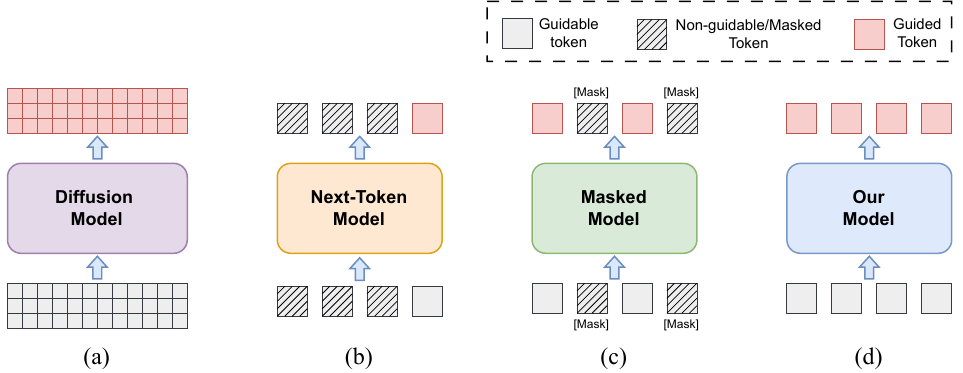}
    \vspace{-4mm}
    \caption{Comparing our model to existing guidance techniques for controlling motion. (a) Diffusion model~\cite{xie2024omnicontrol} directly perturbs dense motion in high-dimensional space and requires many denoising steps. (b) Next-token model~\cite{zhang2023t2m_gpt} can only guide one token at a time and suffers from error accumulation. (c) Masked model~\cite{pinyoanuntapong2025maskcontrol} repeatedly unmasks only part of the token sequence at each step, leading to inconsistent guidance signals and global drift. (d) Our method guides the full token sequence within each scale in one update under a consistent signal, yielding superior control quality and efficiency.}
    \label{fig:concept_comparisons}
    \vspace{-8mm}
\end{figure*}

To achieve robust human motion controllability, several methods have been proposed~\cite{karunratanakul2023gmd,xie2024omnicontrol,dai2024motionlcm,wan2024tlcontrol,pinyoanuntapong2025maskcontrol}. Fig.~\ref{fig:concept_comparisons} summarizes current control guidance paradigm limitations and motivates our guidance strategy for high-quality and efficient training-free control. In particular, diffusion-based models have become prominent in motion synthesis due to their realism and diversity~\cite{tevet2023mdm,dabral2023mofusion,chen2023mld,zhang2023finemogen,zhuo2025infinidreamer}. Diffusion guidance is used to perturb the generated motion directly in the raw motion space, which allows for joint and trajectory constraints optimization during sampling~\cite{tevet2023mdm,zhou2024emdm,karunratanakul2023gmd,karunratanakul2024dno,xie2024omnicontrol}. However, their reliance on iterative denoising incurs substantial computational cost, which becomes a bottleneck for test-time control. An alternative direction discretizes motion via vector quantization and generates motion tokens using autoregressive next-token transformers~\cite{zhang2023t2m_gpt,jiang2023motiongpt,zhang2024motiongpt,lu2025scamo} or masked modeling~\cite{guo2024momask,pinyoanuntapong2024mmm,yuan2024mogents,guo2025snapmogen}. While token models are sampling-efficient, controlling them remains highly challenging. This is because next-token generation is inherently sequential, and errors in early predictions accumulate and propagate throughout the generated sequence, making it challenging to maintain global constraints~\cite{zhang2023t2m_gpt}. Masked modeling improves parallelism, but relies on repeatedly unmasking only part of the token sequence at each step~\cite{pinyoanuntapong2025maskcontrol,wan2024tlcontrol}. Furthermore, the control signal is applied only to the currently unmasked tokens, while the remaining tokens are still unknown, leading to global drift and inconsistency. Additionally, prior works~\cite{dai2024motionlcm,xie2024omnicontrol,pinyoanuntapong2025maskcontrol} rely on external models, e.g., ControlNet~\cite{zhang2023controlnet}, to align the predicted motion tokens with the control joints. However, this requires the target control signals for training, which undermines its usability where retraining is infeasible for end users during generation.

In this paper, we present MSCoT, a new \textbf{M}ulti-\textbf{S}cale \textbf{Co}arse-to-fine con\textbf{T}rol method for fast and flexible human motion control. Fig.~\ref{fig:applications} illustrates different applications of MSCoT on training-free human motion control under a broad range of practical scenarios. Inspired by the seminal multiscale approach for image modeling~\cite{ronneberger2015unet,lin2017feature_pyramid,karras2018progressive_gan,tian2024var}, MSCoT builds upon a multi-scale hierarchical discrete representation that captures different nuances of the human motion at different temporal resolutions. We discover that earlier scales capture the low-frequency coarse information of motion, e.g., overall smooth global trajectory, while later scales provide high-frequency local details, e.g., periodic arm/leg swaying, as demonstrated in Fig.~\ref{fig:spectrogram}. This multi-scale structure is particularly well-suited for coarse-to-fine motion control, as the motion details are gradually refined across scales. Building upon this insight, we introduce a multi-scale motion generation transformer that progressively generates motion details in a coarse-to-fine fashion, along with an adaptive scale scheduling that dynamically adjusts intermediate temporal resolutions to the target length to support length-varying motions without retraining. To address the key challenge of discrete token guidance due to the non-differentiable categorical sampling, we derive a new multi-scale token guidance under analytical Bayes' formulation to guide the token posterior distribution toward the control goal target within a \emph{single} update at each scale. This strategy provides flexible controllability with fast inference speed. Finally, we provide the theoretical analysis to show that the approximated posterior of our method is close to the exact posterior, which guarantees the quality and feasibility of controlling test-time human motion.

\newlength{\imgsep}
\newlength{\colw}
\begin{figure*}[t]
  \centering
  \setlength{\imgsep}{0.9pt}
  \setlength{\colw}{\dimexpr(\linewidth - 2\imgsep)/3\relax}
  \includegraphics[width=\colw]{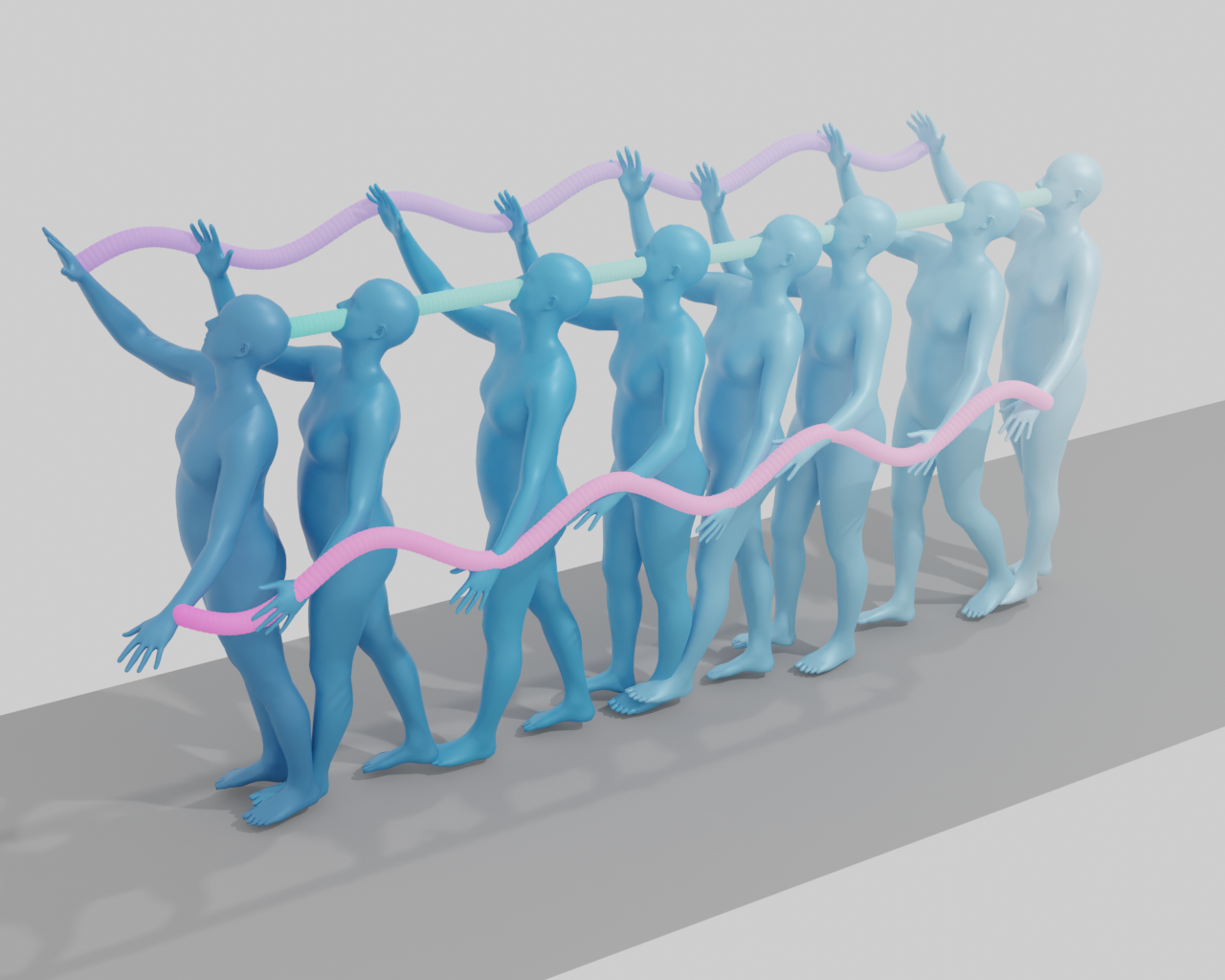}\hspace{\imgsep}%
  \includegraphics[width=\colw]{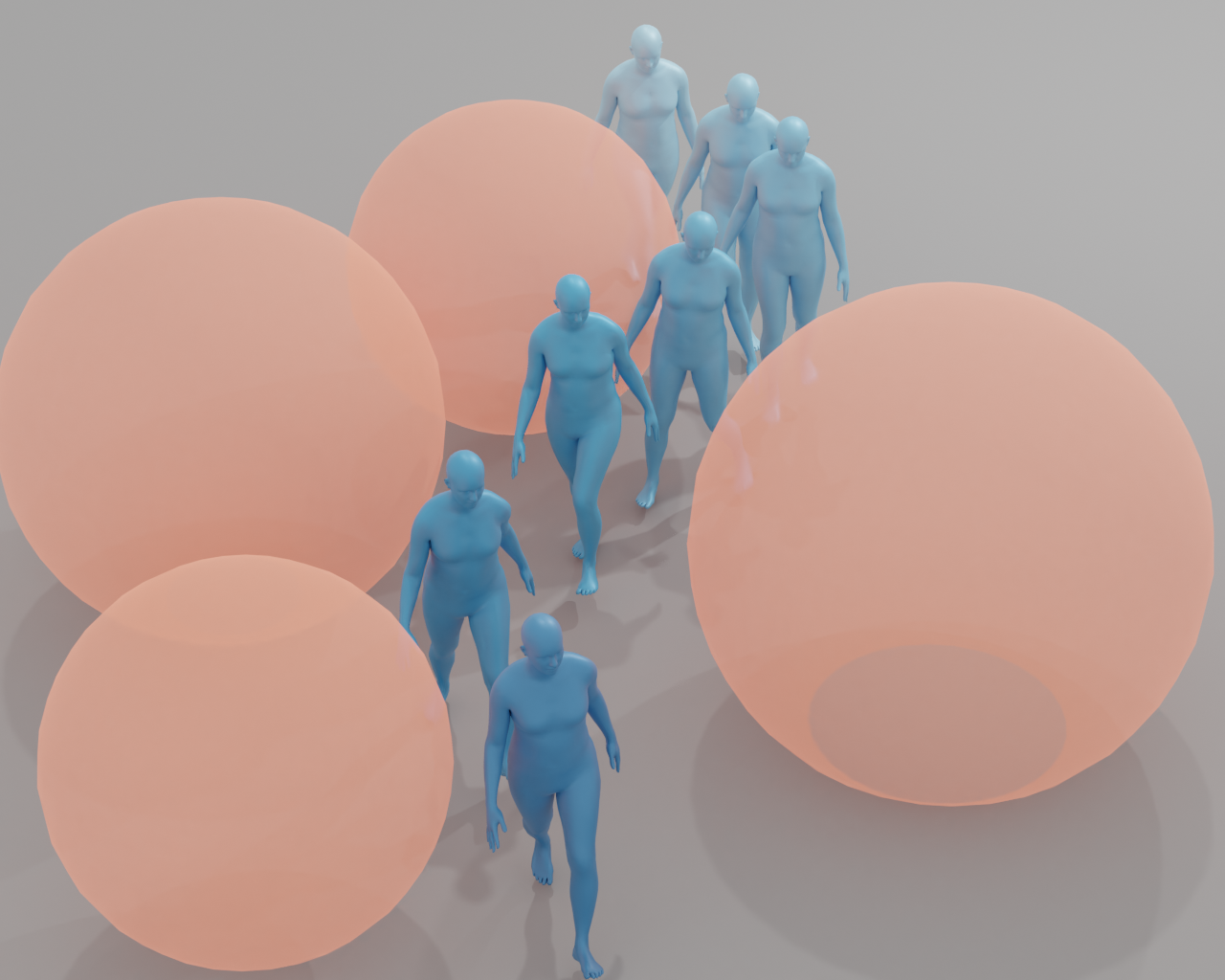}\hspace{\imgsep}%
  \includegraphics[width=\colw]{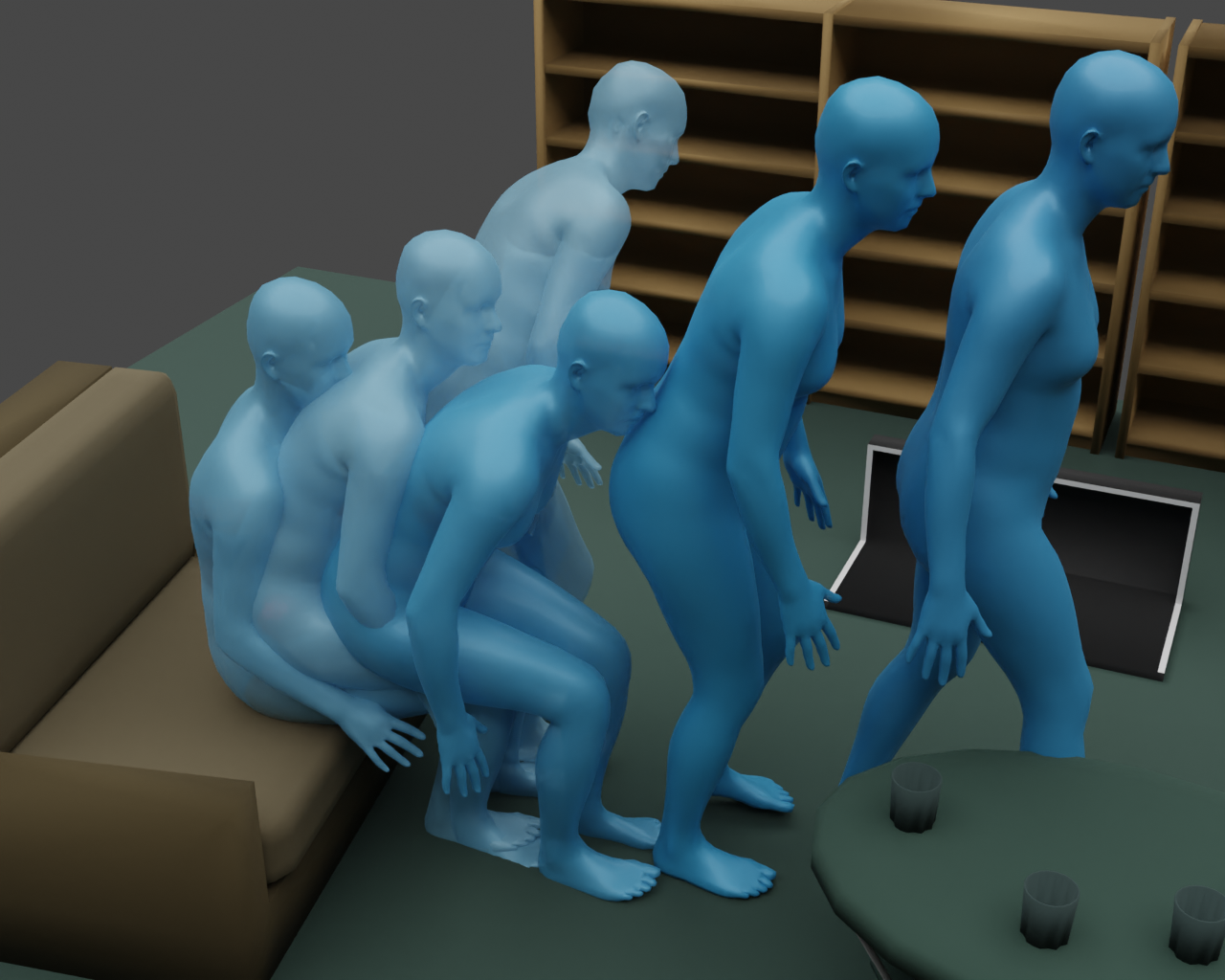}
  \makebox[\colw][c]{\footnotesize (a)}\hspace{\imgsep}%
  \makebox[\colw][c]{\footnotesize (b)}\hspace{\imgsep}%
  \makebox[\colw][c]{\footnotesize (c)}
  \vspace{-6mm}
  \caption{Example applications of our MSCoT on test-time, training-free, human motion control: (a) Controlling any joint at any time, (b) Obstacle avoidance, (c) Human-scene interaction. Best viewed in supplementary video.}
  \label{fig:applications}
  \vspace{-6mm}
\end{figure*}

In summary, our key contributions are: \textit{(i)} we introduce MSCoT, a new multi-scale coarse-to-fine control paradigm that enables fast and precise training-free human motion control; \textit{(ii)} to pursue this paradigm, a multi-scale motion model is designed to progressively generate motion from coarse to fine, along with an efficient multi-scale token guidance method to overcome the challenge of discrete token sampling and facilitate arbitrary control objectives within a single update at each scale; \textit{(iii)} we provide theoretical analysis and extensive experiments to demonstrate the effectiveness of our framework in controllable human motion generation, outperforming state-of-the-art methods in generation quality, control accuracy, and inference speed. Our model can generalize to different control applications without relying on input control targets at training.
\section{Related Works}
\label{Sec:relatedwork}

\textbf{Generative models for human motion synthesis.} Human motion generation is a long-standing challenge in computer vision and graphics that seeks to synthesize natural, diverse human movements from various conditions, including action semantics~\cite{guo2020early_action2motion,petrovich2021actor,lucas2022posegpt,xu2023actformer,tan2025think}, audio~\cite{li2021early_music2dance,siyao2022bailando,tseng2023edge,zhou2023ude,zhang2025motion_anything}, style~\cite{song2024arbitrary_style,guo2024generative_stylization,zhong2024smoodi,kim2025personabooth}, and text~\cite{ahuja2019early_language2pose,guo2022tm2t,li2025lamp,wang2025you}. Early methods use deep networks to learn deterministic mappings from conditions to motions~\cite{holden2016motion_cnn,fragkiadaki2015motion_rnn, ahuja2019early_language2pose,mao2019early_pose_history,aksan2021motion_transformer}, but often generate blurry or drifting results because they regress toward averaged poses. This limitation is addressed by probabilistic generative models that directly model motion distributions. GANs~\cite{wang2020motion_gan,raab2023modi} learn stochastic mappings through adversarial training, while conditional VAEs~\cite{zhao2022coins,petrovich2022temos} align motion and condition in a shared variational latent space. More recently, diffusion models have become a dominant approach for motion generation due to their realism and diversity~\cite{tevet2023mdm,dabral2023mofusion,zhang2024motiondiffuse,chen2023mld,zhou2023ude,zhang2023finemogen,kong2023priority,wang2023fg,zhuo2025infinidreamer}. However, their main drawback is high sampling cost, since generating one sequence typically requires many denoising steps~\cite{zhou2024emdm}. Later works~\cite{zhou2024emdm,dai2024motionlcm,jiang2025motionpcm} reduce steps, e.g., through consistency distillation~\cite{dai2024motionlcm}, to improve efficiency, but at the cost of generation quality. Other methods use autoregressive diffusion to generate streaming motion from historical poses and action text streams~\cite{han2024amd,shi2024interactive,zhao2025dartcontrol,xiao2025motionstreamer}, yet they often suffer from error accumulation and limited controllability. Another line of work uses vector quantization~\cite{esser2021taming} to tokenize continuous motions into discrete motion codes with VQ-VAE~\cite{van2017vqvae}. New motions are then generated either by autoregressive next-token prediction~\cite{zhang2023t2m_gpt,jiang2023motiongpt,zhang2024motiongpt,zhou2024avatargpt,lu2025scamo} or masked modeling~\cite{guo2024momask,pinyoanuntapong2024mmm,yuan2024mogents,guo2025snapmogen}. However, achieving precise control remains difficult in these models due to the non-differentiable discrete sampling and quantization artifacts that hinder fine-grained motion modeling. Unlike these approaches, we adopt a multi-scale motion prediction paradigm in which the sequences are progressively generated across multiple temporal scales. This coarse-to-fine design captures different motion nuances at different levels, making it natural and suitable for control.

\noindent\textbf{Controllable motion synthesis.} Recent methods synthesize motion under broader control objectives, including object contacts~\cite{taheri2022goal,xu2023interdiff,diller2024cghoi,kulkarni2024nifty,li2024chois}, scene awareness~\cite{rempe2023trace,huang2023scenediffuser,diomataris2024wandr,yi2024tesmo,wang2024move,jiang2024trumans}, humanoid controllers~\cite{peng2021amp,luo2023perpetual_humanoid,tevet2025closd,zhang2025primal,wu2025uniphys}, and physical rules~\cite{xie2021physics_based,yuan2023physdiff,ji2025pomp,liu2024progmogen,li2025morph}. In text-driven motion, achieving precise, diverse joint-trajectory control while preserving fidelity and text alignment remains challenging~\cite{xie2024omnicontrol,guo2025motionlab}. MDM~\cite{tevet2023mdm} and PriorMDM~\cite{shafir2024priormdm} enable spatial control via inpainting missing joints during denoising. Classifier guidance~\cite{dhariwal2021diffusion,bansal2024universal_guidance} offers flexible conditioning for diffusion models. GMD~\cite{karunratanakul2023gmd} handles sparse root-location control with a two-stage guided diffusion model. OmniControl~\cite{xie2024omnicontrol} generalizes spatial guidance to perturb individual joints during denoising. DNO~\cite{karunratanakul2024dno} and DART~\cite{zhao2025dartcontrol} edit poses and trajectories by optimizing latent diffusion noise to minimize a control objective~\cite{karunratanakul2024dno}. MotionLCM~\cite{dai2024motionlcm} injects explicit control via ControlNet~\cite{zhang2023controlnet}. Token-based methods such as TLControl~\cite{wan2024tlcontrol} or CoMo~\cite{huang2024como} use part-wise quantizers for fine-grained code editing but require dense input target trajectories. MaskControl~\cite{pinyoanuntapong2025maskcontrol} aligns token predictions to spatial targets using ControlNet~\cite{zhang2023controlnet} and logit optimization at each MoMask unmasking step~\cite{guo2024momask}. In contrast, MSCoT achieves controllable generation via efficient multi-scale token guidance with a \emph{single} analytic posterior update at each scale step.
\section{Fast Human Motion Control with MSCoT}
\label{Sec:method}

\begin{figure*}[t]
    \centering
    \includegraphics[width=0.9\linewidth]{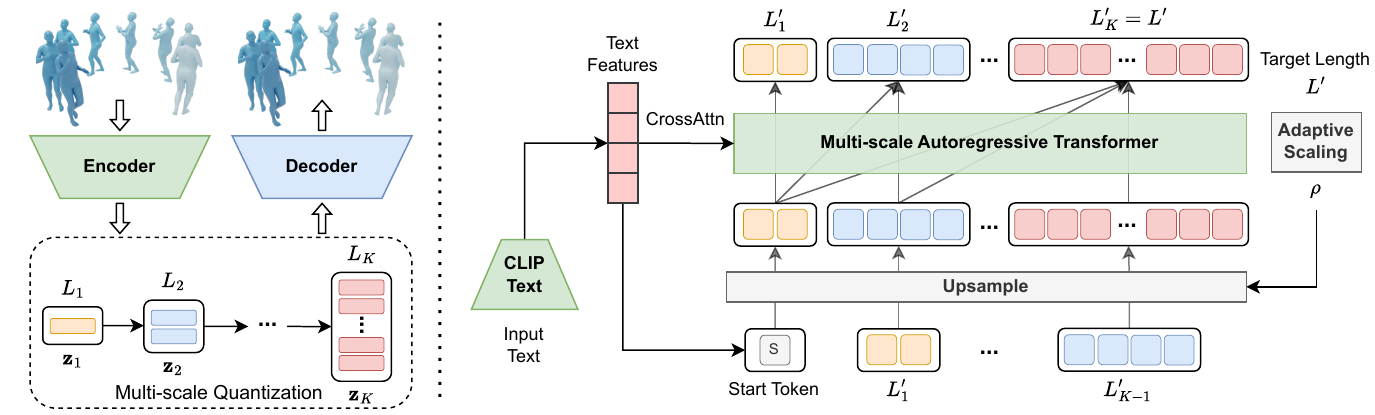}
    \caption{Overview of our multi-scale motion generation model. First, a multi-scale VQ-VAE encodes a motion sequence into $K$ discrete token sequences $\bm{z} = \{\bm{z}_1,\bm{z}_2,\dots,\bm{z}_K\}$ at increasing temporal resolution. Next, a multi-scale autoregressive transformer takes the token sequences from all previous scales $\{\text{\texttt{[s]}},\bm{z}_1,\bm{z}_2,\dots,\bm{z}_{k-1}\}$ and predicts the tokens for the next scale $\bm{z}_k$. The input and corresponding outputs are adaptively resized by the scale scheduling to match the target length. Token color signifies scale.}
    \label{fig:method}
    \vspace{-6mm}
\end{figure*}

We pursue a fast control paradigm that generates precise and quality human motion in accordance with the control objective.
Since text is the most commonly used interface for users to specify high-level motion intent, we learn a conditional probabilistic model $p(\bm{x}\vert\bm{y})$ to generate a human motion sequence $\bm{x} = \{\bm{x}_1,\ldots,\bm{x}_T\} \in \mathbb{R}^{T\times D}$ conditioned on text prompt $\bm{y}$ and control goal objective $G$, where $T$ is the motion length and $D$ is the pose dimension.
During inference, the model should be able to generate faithful human motions that both follow the high-level intent from textual description, as well as the fine-grained control signals specified by users, i.e., 3D positions of body joints specified in arbitrary frames.
To facilitate our coarse-to-fine control framework, Sec.~\ref{subsec:motion_var} introduces the multi-scale motion model architecture, where the motion sequence is encoded into a hierarchy of discrete representations that capture the multi-scale, coarse-to-fine details of motion. To enable control, we take advantage of this coarse-to-fine structure and propose multi-scale token guidance in Sec.~\ref{subsec:token_guidance}. Our guidance strategy overcomes the challenge of discrete sampling and allows for efficient posterior update that adjusts the token sampling distribution according to the control goal, without requiring control inputs during training. In Sec.~\ref{sub_theory_analysis}, we provide the theoretical analysis of our method, which shows that the approximated posterior of our method is close to the exact posterior.

\vspace{-4mm}
\subsection{MSCoT Architecture}
\vspace{-2mm}
\label{subsec:motion_var}

Fig.~\ref{fig:method} shows an overview of MSCoT backbone architecture. Given the encoded latent features $\bm{f}\in\mathbb{R}^{L\times d}$, with length $L$ and dimension $d$, of the motion sequence $\bm{x}\in\mathbb{R}^{T\times D}$, with a downsampling rate of $T/L$, we quantize the motion features $\bm{f}$ into $K$ token sequences $\bm{z} = \{\bm{z}_k\}_{k=1}^K$ at progressively finer temporal resolutions $(L_1\le L_2 \le \ldots \le L_K=L)$. With this multi-scale token hierarchy, we model the following likelihood to autoregressively generate the token sequence at different scale levels, conditioned on previous scales:

\vspace{-3mm}
\begin{equation}
    \label{eq:ar_likelihood}
    p(\bm{z}_1,\bm{z}_2,\ldots,\bm{z}_K\vert \bm{y}) = \prod_{k=1}^K p(\bm{z}_k \vert \bm{z}_{<k};\bm{y}),
\end{equation}

\noindent where $\bm{y}$ is the conditioning text, and the sequence $\bm{z}_{<k}=\{\bm{z}_1,\bm{z}_2,\ldots,\bm{z}_{k-1}\}$ is the prefix for predicting $\bm{z}_k$. Note that our model aims at training-free motion control, so the control goal $G$ is not required during training.

\textbf{Multi-scale Motion Tokenization.} To tokenize the input motion $\bm{x}$ into multi-scale discrete token sequences, we design a multi-scale VQ-VAE with a multi-scale quantizer $\mathcal{Q}$:

\vspace{-3mm}
\begin{equation}
    \bm{f} = \mathcal{E}(\bm{x}),\quad\quad \bm{z}=\mathcal{Q}(\bm{f})
\end{equation}
where a motion encoder $\mathcal{E}$ maps the motion sequence into latent features $\bm{f}\in\mathbb{R}^{L\times d}$. At scale $k$, the quantizer $\mathcal{Q}$ converts $\bm{f}$ to a sequence of discrete token codes by assigning each latent vector to its nearest entry in a learnable codebook $\mathcal{Z} \in \mathbb{R}^{V\times d}$ with codebook size $V$. In practice, we use residual quantization~\cite{zeghidour2021quant_dropout,lee2022residual_quant,guo2024momask} to further alleviate quantization error:
\begin{equation}
    \bm{z}_k^{(l)} = \arg\min\limits_{v\in\{1,\ldots,V\}}\Vert \mathrm{Lookup}(\mathcal{Z},v)-\mathrm{Down}_{k}^{(l)}(\bm{r}_k)\Vert_2,
\end{equation}
\vspace{-2mm}
\begin{equation}
    \label{eq:f_hat_lookup}
    \hat{\bm{f}}_k = \mathrm{Up}_{K}(\mathrm{Lookup}(\mathcal{Z},\bm{z}_k)) \in \mathbb{R}^{L_K\times d},
\end{equation}
\vspace{-2mm}
\begin{equation}
    \bm{r}_{k+1} = \bm{r}_{k} - \hat{\bm{f}}_k,  \qquad
    \bm{r}_1 = \bm{f},
\end{equation}
where $l$ is the temporal position, and $\mathrm{Lookup}(\mathcal{Z},v)$ means taking the $v$-th vector in the codebook $\mathcal{Z}$. $\mathrm{Down}_{k}(\cdot)$ and $\mathrm{Up}_{k}(\cdot)$ denote temporal down/upsampling feature sequence to the $k$-scale using linear interpolation. We also append an extra 1D convolution layer after the upsampling at each scale to alleviate information loss in upscaling. The final approximation $\hat{\bm{f}}$ of the original features $\bm{f}$ is obtained as the sum of the upsampled quantized features across all $K$ scales, which is then passed through a decoder $\mathcal{D}$ to reconstruct the input motion:
\begin{equation}
\label{eq:vae_decode}
    \hat{\bm{f}} = \sum_{k=1}^K\hat{\bm{f}}_k, \qquad \hat{\bm{x}} = \mathcal{D}(\hat{\bm{f}}).
\end{equation}
The multi-scale VQ-VAE is trained on human motion sequences with a compound objective~\cite{tian2024var}. Once trained, the autoencoder $(\mathcal{E}, \mathcal{Q}, \mathcal{D})$ is used to tokenize the human motion for training the multi-scale autoregressive transformer.

\textbf{Multi-scale Autoregressive Transformer.}
To model the autoregressive likelihood in Eq.~\ref{eq:ar_likelihood}, we adopt a transformer~\cite{vaswani2017attention} as our generative backbone. Our model generates the entire token sequence in parallel at each scale. The process begins with a start token \texttt{[s]} at the first level, and then progressively generates the next token sequence $\bm{z}_k$ conditioned on coarser scale tokens $\bm{z}_{<k}$, with increasing sequence length from $L_1$ up to $L_K$. For text conditioning, we use CLIP~\cite{radford2021clip} to extract the text feature vector from the given prompt $\bm{y}$ and use it as the start token to initialize the autoregressive process at the first scale. In addition, we extract word-level embeddings from the last attention layer of CLIP and inject them into the transformer via a cross-attention layer~\cite{vaswani2017attention}.

\textbf{Adaptive Scaling for Length-Varying Motion.} In the standard multi-scale modeling for image~\cite{karras2018progressive_gan,tian2024var}, each scale level predicts a fixed number of tokens, at progressively finer resolution up to the finest level that matches the image resolution. This setup works for images with fixed sizes across different samples; however, human motions usually have different durations that cannot be dynamically captured by a fixed scale schedule. This would often require training different models for different resolutions, which is impractical. To address this, we introduce an adaptive scale scheduling strategy to support length-varying motion generation without retraining. Specifically, given the target length $L'$, we compute a sequence scale ratio $\rho=\frac{L'}{L}$, where $L=L_K$ is the base length of the latent sequence at the final scale during VAE quantization training. The corresponding length at each scale is resized proportionally:
\begin{equation}
    L'_k = \lceil\ \rho L_k\rceil, \qquad k\in1,\ldots,K,
\end{equation}
so that it guarantees the length at the final scale $L'_K$ matches the target length, i.e., $L'_K = L'$, and preserves the coarse-to-fine scale ordering. Under this adaptive scale schedule, the transformer will autoregressively generate per-scale token sequences with new target lengths $(L'_1,\ldots,L'_K)$. Finally, the autoregressive transformer is trained using a standard entropy loss between predicted and quantized ground-truth tokens at all scales~\cite{tian2024var}.

\begin{figure*}[t]
    \centering
    \includegraphics[width=\linewidth]{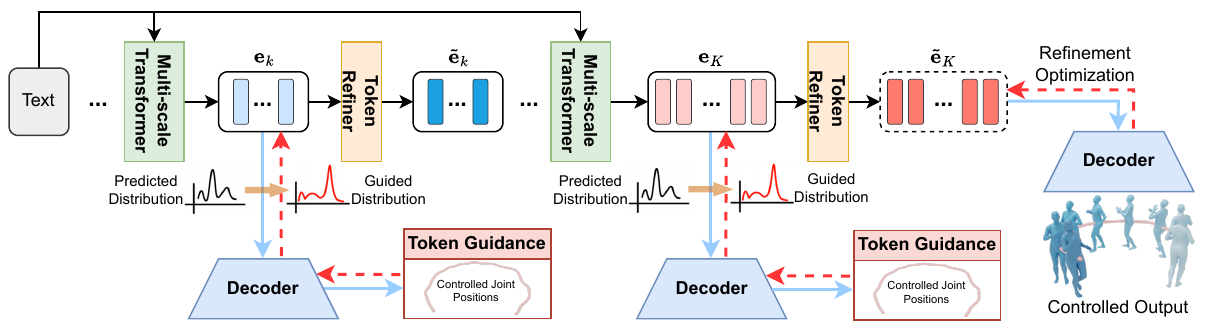}
    \vspace{-4mm}
    \caption{Illustration of our multi-scale guidance and refinement. At each scale, the guided token posterior distribution is updated by evaluating the control objective's likelihood through a single decoder's forward-backward pass. A small token refiner then adds continuous residuals to the sampled token embeddings; at the last scale $K$, a test-time refinement optimization further ensures precise alignment with the control objective.}
    \label{fig:guidance}
    \vspace{-6mm}
\end{figure*}

\vspace{-4mm}
\subsection{Controllability with Multi-Scale Guidance}
\label{subsec:token_guidance}
\vspace{-2mm}

At inference, it is crucial to control the generated motions to precisely follow constraints specified by users, such as a trajectory or target joint locations. These controls can be formally defined as a scalar goal function $G(\hat{\bm{x}})$ that measures how well the generated motion $\hat{\bm{x}}$ satisfies the user-specific goal, e.g., the distance between the generated and target joint positions specified by the user. Our guidance process is depicted in Fig.~\ref{fig:guidance}.

\textbf{Multi-scale Token Guidance.}
Our conditional likelihood in Eq.~\ref{eq:ar_likelihood} now becomes $p(\bm{z}_k\vert \bm{z}_{<k},\bm{y},G)$. Here we assume $G$ does not depend on the text prompt $\bm{y}$, so we will omit it from now on for simplicity. Following Bayes' rule, we can rewrite the goal-guided probability at $k$-th scale as:
\begin{equation}
    p(\bm{z}_k|\bm{z}_{<k},G) = \frac{p(G|\bm{z}_k,\bm{z}_{<k})p(\bm{z}_k| \bm{z}_{<k})}{\sum_{\bm{z}'_k \in \{V\}^{L_k}}p(G|\bm{z}'_k,\bm{z}_{<k})p(\bm{z}'_k| \bm{z}_{<k})}.
\end{equation}
The normalizing factor is intractable since it requires the sum over all combinations $\bm{z}'_k$ of the whole token sequence $V^{L_k}$. To make it tractable, we follow the popular mean field approach~\cite{wainwright2008meanfield_vi} and assume each token is independent of the other at the same scale given the prefix; therefore, the joint probability $p(\bm{z}_k|\bm{z}_{<k},G)$ factorizes across positions along the sequence length $L_k$:
\begin{equation}
    \label{eq:posterior}
    p\big(\bm{z}_{k}\vert \bm{z}_{<k}, G\big) \approx\prod_{l=1}^{L_k}\frac{p\big(G\vert \bm{z}_{k}^{(l)}, \bm{z}_{<k}\big)\,p\big(\bm{z}_{k}^{(l)}\vert \bm{z}_{<k}\big)} {\sum_{v=1}^{V} p\big(G\vert \bm{z}_{k}^{(l)}\!=\!v, \bm{z}_{<k}\big)\,p\big(\bm{z}_{k}^{(l)}\!=\!v\vert \bm{z}_{<k}\big)},
\end{equation}
where the prior term $p\big(\bm{z}_{k}^{(l)}\vert \bm{z}_{<k}\big)$ is the probability predicted by the multi-scale transformer at scale $k$ and position $l$. Despite being more tractable, we still need to evaluate the function $G(\cdot)$ for $V$ times per position, which requires $V$ VAE decoding forward passes of the intermediate tokens, making it highly inefficient and impractical for a large codebook size and long sequence.

To address this issue, we use first-order Taylor approximation in the codebook embedding space. Specifically, let $\bm{e}^{(l)}_k\in\mathbb{R}^d$ be the current token embedding after looking up the $\bm{z}_{k}^{(l)}$ entry in the codebook, i.e., $\bm{e}^{(l)}_k=\mathrm{Lookup}(\mathcal{Z},\bm{z}_{k}^{(l)})$, and $\bm{w}=[\bm{e}^{(l)}_k; \bm{e}_{<k}]\in\mathbb{R}^{d + L_p \times d}$ be the flattened concatenated vector of the current token with the prefix embedding sequence with prefix length $L_p$. Since $G(\cdot)$ is a function of $[\bm{e}^{(l)}_k; \bm{e}_{<k}]$ with VAE decoding, the goal likelihood can be approximated as:
\begin{equation}
    \label{eq:taylor_approx}
\log p\big(G\,\vert\,\bm{z}^{(l)}_k,\bm{z}_{<k}\big)
\approx
\big(\bm{w}-\bar{\bm{w}}\big)^{\intercal}\big[\nabla_{\bm{w}}\,\log p\big(G\,\vert\,\bm{w}\big)\big]_{\bm{w}=\bar{\bm{w}}}
+
\log p\big(G\,\vert\,\bar{\bm{w}}\big),
\end{equation}
where $\bar{\bm{w}}=[\bar{\bm{e}}^{(l)}_k; \bm{e}_{<k}]$ is the Taylor expansion point. Because the prefix $\bm{e}_{<k}$ is already sampled and fixed from previous scale steps, the difference $\bm{w}-\bar{\bm{w}}$ has zero entries in the prefix block after the first $d$ elements: $\bm{w}-\bar{\bm{w}}=[\bm{e}^{(l)}_k-\bar{\bm{e}}^{(l)}_k;\bm{0}]$. As a result, the first-order term only depends on the gradient with respect to the embedding at position $l$ and current scale $k$, evaluated at the chosen expansion vector $\bar{\bm{e}}^{(l)}_k$. We gather $\{\bar{\bm{e}}^{(l)}_k\}_{l=1}^{L_k}$ for all positions $l$ into the expansion vector sequence at scale $k$, feed them through VAE decoder together with the prefix sequence to get the intermediate motion $\hat{\bm{x}}_k$ according to Eq.~\ref{eq:f_hat_lookup}--\ref{eq:vae_decode}, evaluate the goal $G(\hat{\bm{x}})$, and obtain the gradients of the guidance signals for all positions in a \textit{single forward-backward pass}.

To obtain the expansion vector $\bar{\bm{e}}^{(l)}_k$ for each token position $l$ at current scale $k$, we approximate the goal likelihood around the prior expected embedding predicted by the autoregressive transformer backbone:
\begin{equation}
    \label{eq:expansion}
    \bar{\bm{e}}^{(l)}_k = \sum_{v=1}^{V} p\big(z^{(l)}_k=v\vert z_{<k}\big)\cdot\mathrm{Lookup}(\mathcal{Z}, v),
\end{equation}
where $p\big(z^{(l)}_k=v\vert z_{<k}\big)$ is the predicted prior probability of selecting the codebook entry $v$ for position $l$ at scale $k$. Our approximated expansion point is the weighted average over all codebook embeddings, which can be viewed as the model's expectation under its own belief about the current token. Since the prior prediction at the current scale conditions on the already guided prefix token from earlier scales, its expected embedding is aware of the goal signal and thus can serve as a reasonable expansion point for the approximation. The guided posterior is then updated by plugging Eq.~\ref{eq:taylor_approx} and Eq.~\ref{eq:expansion} into Eq.~\ref{eq:posterior} for the whole token sequence altogether. Intuitively, the gradient $\nabla\log p (G\vert\cdot)$ represents the local direction towards increasing the goal likelihood, and $\bm{w}-\bar{\bm{w}}$ is the offset from the expansion point to a candidate code embedding. The dot product in Eq.~\ref{eq:taylor_approx} measures the alignment between them. It implies that the candidate entry that aligns better with the ascent direction of the goal likelihood will receive higher weight in the posterior update in Eq.~\ref{eq:posterior}.

\textbf{Continuous Token Refinement.} Although our multi-scale token guidance provides an efficient strategy to steer the sampling closer toward the goal, it is limited by the capacity of the discrete codebook. We therefore employ a token refiner, which is a small transformer encoder network with self-attention layers~\cite{vaswani2017attention}, to predict continuous residual refinements on top of the quantized embeddings. Concretely, at each scale $k$, the token refiner takes the current-scale token embedding sequence $\bm{e}_k$ as input, and then outputs the residual $\Delta\bm{e}^{(l)}_k\in\mathbb{R}^d$ per position $l$. We then compute the refined embedding by adding the residual $\tilde{\bm{e}}^{(l)}_k=\bm{e}^{(l)}_k + \Delta\bm{e}^{(l)}_k$, which can be used to calculate the refined latent features and decoded motion similar to Eq.~\ref{eq:vae_decode}. The token refiner is trained on top of the frozen VQ-VAE tokenizer using the same reconstruction objectives as the autoencoder training, without using any target control signals. During inference, we also perform test-time refinement optimization at the final scale level to further ensure goal satisfaction. Subsequently, we aim to maximize the goal likelihood:
\begin{equation}
    \max_{\Delta\bm{e}}\;\log G\big(\mathcal{D}(\hat{\bm{f}}(\bm{e}+\Delta\bm{e})\big),
\end{equation}
which is fully differentiable with respect to $\Delta\bm{e}$ by backpropagation through the decoder $\mathcal{D}$. Hence $\Delta\bm{e}$ can be updated using a gradient-based optimizer~\cite{ruder2016sgd}.

\vspace{-4mm}
\subsection{Theoretical Analysis}
\vspace{-2mm}
\label{sub_theory_analysis}

We provide a theoretical analysis of our design with the following proposition.

\begin{prop}
\label{theorem1}
Assume $\phi(\bm{e}) = \log p (G\vert \bm{z}_k, \bm{z}_{<k})$, $\phi: \mathbb{R}^{d} \mapsto \mathbb{R}$ is a twice-differentiable function in the token embedding space with $C$-Lipschitz continuous gradient, i.e., $\Vert\nabla^2\phi(\cdot)\Vert_2 \leq C$, along the line segment between any code $\bm{e}\in\mathbb{R}^d$ in the codebook and an expansion point $\bm{a}\in\mathbb{R}^d$. Let $p^*$ be the true posterior at the temporal position $l$ in Eq.~\ref{eq:posterior} and $q_a$ our approximated posterior in Eq.~\ref{eq:taylor_approx} around an expansion point $\bm{a}$. Then the Kullback--Leibler (KL) divergence between the true and approximated posterior distribution satisfies:
\begin{align*}
    D_{KL}(p^*\Vert q_a) &\leq \frac{C}{2}\big( \mathbb{E}_{p^*}[\Vert \bm{e}- \bm{a}\Vert^2_2] + \mathbb{E}_{q_a}[\Vert \bm{e}- \bm{a}\Vert^2_2]\big) \\
    &\leq C \sup_{v}\Vert \bm{e}_v- \bm{a}\Vert^2_2.
\end{align*}
\end{prop}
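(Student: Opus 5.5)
We write both distributions as Gibbs distributions over the $V$ codebook entries at position $l$ and compare their energies. Let $\pi_v = p(z_k^{(l)}=v\mid \bm{z}_{<k})$ be the prior and $\bm{e}_v=\mathrm{Lookup}(\mathcal{Z},v)$. By Eq.~\ref{eq:posterior}, the true posterior is $p^*(v)=\pi_v e^{\phi(\bm{e}_v)}/Z_p$. Let $\psi(\bm{e})=\phi(\bm{a})+(\bm{e}-\bm{a})^\intercal\nabla\phi(\bm{a})$ be the linearization of Eq.~\ref{eq:taylor_approx} around $\bm{a}$. The approximate posterior is then $q_a(v)=\pi_v e^{\psi(\bm{e}_v)}/Z_q$. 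Set $r(\bm{e})=\phi(\bm{e})-\psi(\bm{e})$. The prior cancels in the log-ratio, which gives
\begin{equation*}
D_{KL}(p^*\Vert q_a)=\mathbb{E}_{p^*}[r(\bm{e})]-\log\frac{Z_p}{Z_q},\qquad \frac{Z_p}{Z_q}=\mathbb{E}_{q_a}\big[e^{r(\bm{e})}\big].
\end{equation*}

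The first key step handles the log-partition ratio with Jensen's inequality. Since $\log$ is concave, $\log\mathbb{E}_{q_a}[e^{r}]\ge \mathbb{E}_{q_a}[r]$, and therefore
\begin{equation*}
D_{KL}(p^*\Vert q_a)\le \mathbb{E}_{p^*}[r(\bm{e})]-\mathbb{E}_{q_a}[r(\bm{e})]\le \mathbb{E}_{p^*}[|r(\bm{e})|]+\mathbb{E}_{q_a}[|r(\bm{e})|].
\end{equation*}
The same bound also follows from the symmetric identity $D_{KL}(p^*\Vert q_a)+D_{KL}(q_a\Vert p^*)=\mathbb{E}_{p^*}[r]-\mathbb{E}_{q_a}[r]$ together with nonnegativity of the reverse KL.

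The second step bounds the Taylor remainder pointwise. By the mean-value (integral) form of Taylor's theorem along the segment from $\bm{a}$ to $\bm{e}_v$, we have $r(\bm{e}_v)=\tfrac12(\bm{e}_v-\bm{a})^\intercal\nabla^2\phi(\bm{\xi})(\bm{e}_v-\bm{a})$ for some $\bm{\xi}$ on that segment. The Hessian bound assumed on that segment then gives $|r(\bm{e}_v)|\le \frac{C}{2}\Vert\bm{e}_v-\bm{a}\Vert_2^2$. Substituting this into the two expectations yields the first inequality of Proposition~\ref{theorem1}. The second inequality follows by bounding each expectation by the supremum over codebook entries, which gives $\frac{C}{2}\cdot 2\sup_v\Vert\bm{e}_v-\bm{a}\Vert_2^2$.

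The main point needing care is the Jensen step, specifically making sure the sign of $\log(Z_p/Z_q)$ enters in the right direction. We will double-check this through the symmetrized-KL identity, which makes the inequality transparent. We will also note that $\phi$ is regarded as a function of the current-scale embedding only, with the prefix fixed, as in the discussion after Eq.~\ref{eq:taylor_approx}. This justifies restricting to $\mathbb{R}^d$.
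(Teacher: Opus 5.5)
Your proof is correct and follows the paper's argument. Both use the same identity $D_{KL}(p^*\Vert q_a)=\mathbb{E}_{p^*}[R]-\log\mathbb{E}_{q_a}[e^{R}]$, the same remainder bound $|R|\le\frac{C}{2}\Vert\bm{e}-\bm{a}\Vert_2^2$, and a Jensen step on the log-partition term; the difference is only in ordering, since the paper lower-bounds $R$ pointwise before applying Jensen to $x\mapsto e^{-x}$, while you apply Jensen first to get $\mathbb{E}_{p^*}[r]-\mathbb{E}_{q_a}[r]$, which by your symmetric identity equals $D_{KL}(p^*\Vert q_a)+D_{KL}(q_a\Vert p^*)$ and so yields the same bound for both KL directions at no extra cost.
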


\begin{proof}
    See Supplementary Material~A.
\end{proof}

\noindent\textit{Remark 1.1.} Proposition~\ref{theorem1} demonstrates that the error, i.e., KL divergence, of our first-order approximation is bounded by the maximum distance between the expansion point and the codebook. It guarantees the approximated posterior should stay close to the exact posterior when $\bm{a}$ is near the codebook's centroid, which suggests that using the prior-mean embedding $\bar{\bm{e}}$ in Eq.~\ref{eq:expansion} as the expansion point is a choice that tightens the bound by centering it at the model's prior belief. In practice, we also apply $\ell_2$-normalization on all codebook embeddings $\bm{e}$. Table~\ref{tab:control_ablation_singlecol} empirically shows that this normalization can significantly improve the guidance quality and accuracy. By constraining all codebook vectors to unit length, it reduces their overall spread around the expansion point and thus further mitigates the approximation discrepancy.
\section{Experiments}
\label{Sec:exp}

We evaluate our method and other baselines on several settings using two popular text-to-motion datasets: HumanML3D~\cite{guo2022humanml3d} and KIT-ML~\cite{plappert2016kitml}. KIT-ML contains 3,911 motion clips with 6,278 associated text annotations. HumanML3D contains 14,616 diverse action sequences paired with a total of 44,970 text descriptions, where the motions are aggregated from AMASS~\cite{mahmood2019amass} and HumanAct12~\cite{guo2020humanact12}.

\vspace{-4ex}
\begin{table*}[t]
\caption{Joint-controlled motion generation evaluation on HumanML3D dataset~\cite{guo2022humanml3d}. \textbf{Pelvis} reports the results on controlling the human's root trajectory. \textbf{Average} indicates the average performance over all joints. Best in \textbf{bold}, second best \underline{underlined}.}
\vspace{-2ex}
\centering
\resizebox{\linewidth}{!}{%
\begin{tabular}{l@{\hspace{8pt}} l c c c c r r r r}
\toprule
\multirow{2}{*}{\textbf{Joint}} & \multirow{2}{*}{\textbf{Method}}
& \multirow{2}{*}{\makecell{\textbf{R-Precision}\\ \textbf{(Top-3)}$\uparrow$}}
& \multirow{2}{*}{\textbf{FID}$\downarrow$}
& \multirow{2}{*}{\makecell{\textbf{Multi}\\\textbf{modality}$\uparrow$}}
& \multirow{2}{*}{\makecell{\textbf{Skating}\\\textbf{Ratio}$\downarrow$}}
& \multirow{2}{*}{\makecell{\textbf{Trajectory Error}\\ $(>50\,\mathrm{cm})$ (\%) $\downarrow$}}
& \multirow{2}{*}{\makecell{\textbf{Location Error}\\ $(>50\,\mathrm{cm})$ (\%) $\downarrow$}}
& \multirow{2}{*}{\makecell{\textbf{Average}\\ \textbf{Error} (cm)$\downarrow$}}
& \multirow{2}{*}{\makecell{\textbf{Run}\\\textbf{time} (s)$\downarrow$}}\\
~ & ~ & ~ & ~ & ~ & ~ & ~ & ~ & ~ & ~ \\
\midrule
\sectionrow{Without ControlNet (training-free)}
\midrule
\multirow{7}{*}{\textbf{Pelvis}}
& MDM~\cite{tevet2023mdm}            & 0.623 & 0.669 & -- & 0.1223 & 45.05 & 32.77 & 63.82 & - \\
& PriorMDM~\cite{shafir2024priormdm}       & 0.572 & 0.444 & -- & 0.0911 & 32.48 & 22.16 & 43.83 & -- \\
& GMD~\cite{karunratanakul2023gmd}            & 0.655 & 0.527 & 1.452 & 0.0899 &  9.02 &  3.86 & 14.02 & 108.41 \\
& OmniControl~\cite{xie2024omnicontrol}    & 0.688 & 0.367 & \underline{1.538} & 0.0681 & 8.85 & 3.71 & 11.14 & 119.54 \\
& TLControl~\cite{wan2024tlcontrol}      & {0.762} & 0.302 & 1.232 & 0.0702 & \underline{1.03} & 0.53 & 2.43 & \underline{34.58} \\
& MaskControl~\cite{pinyoanuntapong2025maskcontrol}  & \underline{0.803} & \underline{0.236} & 1.021 & \underline{0.0679} & 1.16 & \underline{0.21} & \underline{2.08} & 39.02 \\
\cmidrule(lr){2-10}
& \textbf{MSCoT (ours)}   & \textbf{0.812} & \textbf{0.124} & \textbf{1.785} & \textbf{0.0662} & \textbf{0.11} & \textbf{0.00} & \textbf{0.82} & \textbf{3.61} \\
\midrule
\multirow{3}{*}{\textbf{Average}}
& OmniControl~\cite{xie2024omnicontrol}   & 0.683 & 0.359 & \underline{1.643} & 0.0692 & 14.66 & 7.14 & 12.15 & 120.02 \\
& MaskControl~\cite{pinyoanuntapong2025maskcontrol}  & \underline{0.809} & 0.298 & 1.194 & \underline{0.0668} & 0.97 & 0.15 & 1.38 & 39.05 \\
\cmidrule(lr){2-10}
& \textbf{MSCoT (ours)}  & \textbf{0.810} & \textbf{0.175} & \textbf{1.912} & \textbf{0.0661}  & \textbf{0.17} & \textbf{0.01} & \textbf{0.63} & \textbf{3.60} \\
\midrule
\sectionrow{With ControlNet}
\midrule
\multirow{4}{*}{\textbf{Average}}
& MotionLCM~\cite{dai2024motionlcm}
& 0.768 & 0.542 & 0.892    & 0.0598      & 18.89 & 8.14 & 19.58 & --    \\
& OmniControl~\cite{xie2024omnicontrol}
& 0.693 & 0.204 & \underline{1.517} & \underline{0.0549} & 3.68  & 1.01 & 3.29 & 134.06 \\
& MaskControl~\cite{pinyoanuntapong2025maskcontrol}
& \underline{0.802} & \underline{0.075} & 1.012 & 0.0556 & \textbf{0.00} & \textbf{0.00} & \underline{1.11} & \underline{39.21} \\
\cmidrule(lr){2-10}
& \textbf{MSCoT (ours)}
& \textbf{0.817} & \textbf{0.048} & \textbf{1.723} & \textbf{0.0506} & \textbf{0.00} & \textbf{0.00} & \textbf{0.66} & \textbf{3.92} \\
\bottomrule
\end{tabular}%
}
\label{tab:main_eval}
\vspace{-4ex}
\end{table*}

\subsection{Joint-Controlled Motion Generation}
\label{subsec:joint_control}

This section evaluates joint-conditioned generation, where the model must follow user-input joint targets at arbitrary frames. In this setting, the goal likelihood can be defined as the distance between the specified targets $\bm{x}$ and the corresponding generated joints $\hat{\bm{x}}$:
\begin{equation}
\label{eq:joint_control_loss}
    \log p(G|\hat{\bm{x}})\ = -\frac{1}{2\sigma}\sum_{t}\sum_{j} b_{t,j}\Vert \bm{x}_{t,j}-\hat{\bm{x}}_{t,j}\Vert_2^2,
\end{equation}
where $b_{t,j}$ is a binary mask indicating whether the target contains a valid control joint location at frame $t$ for joint $j$, and $\sigma$ is a normalizing constant. Following~\cite{xie2024omnicontrol,pinyoanuntapong2025maskcontrol} on HumanML3D~\cite{guo2022humanml3d}, the control timestamps are randomly sampled across five different densities as in~\cite{xie2024omnicontrol}. To validate the effectiveness of our method, we adopt the motion quality metrics: R-Precision (top-3), FID, Multimodality, and Foot skating ratio; and control accuracy metrics: Trajectory error ($>50$ cm), Location error ($>50$ cm), Average error in cm, and Average generation runtime (in seconds). To ensure a fair comparison, we run the baselines using their official code under the same control settings, hardware, and random seed.

The first section of Table~\ref{tab:main_eval} (without ControlNet, training-free) shows the results of our proposed model and recent methods under the training-free control setting without relying on extra control-specific module training, e.g., using ControlNet~\cite{zhang2023controlnet,xie2024omnicontrol,pinyoanuntapong2025maskcontrol}. We believe this is a more practical setting, where control or retraining are not available, compared to the original setting in~\cite{xie2024omnicontrol,pinyoanuntapong2025maskcontrol}. In terms of controlling human's root trajectory (pelvis), MSCoT demonstrates substantial improvements across all control metrics while significantly faster: $10\times$ faster generation speed than the current state-of-the-art MaskControl~\cite{pinyoanuntapong2025maskcontrol} (token-based masked model) (3.61s vs. 39.02s), and $33\times$ faster than OmniControl~\cite{xie2024omnicontrol} (diffusion model). Moreover, MSCoT reduces the FID from 0.236 to 0.124 (-48\%) while increasing the R-Precision from 0.803 to 0.812, indicating better generation fidelity. For control accuracy, the location error drops to nearly zero, which indicates the generated motions constantly stay within the 50 cm range of desired control targets. Besides the pelvis joint, our model also supports control of different joints. The second part of Table~\ref{tab:main_eval} (with ControlNet) summarizes the average results over multiple commonly controlled joints following the standard protocol from OmniControl~\cite{xie2024omnicontrol}, including pelvis, left foot, right foot, head, left wrist, and right wrist. Overall, MSCoT consistently delivers state-of-the-art performance across all evaluation metrics with significantly lower runtime.

\vspace{-4ex}
\begin{table*}[t]
\caption{Evaluation on different unseen human-scene interaction control tasks. The control objectives and metrics are adopted from ProgMoGen~\cite{liu2024progmogen}.}
\vspace{-2ex}
\centering
\scriptsize
\setlength{\tabcolsep}{4pt}
\renewcommand{\arraystretch}{1.1}
\resizebox{\linewidth}{!}{%
\begin{tabular}{l c c c c | c c c | c c c}
\toprule
\multirow{2}{*}{\textbf{Method}} &
\multicolumn{4}{c|}{\textbf{Task 1: Head height constraint}} &
\multicolumn{3}{c|}{\textbf{Task 2: Avoiding overhead barrier}} &
\multicolumn{3}{c}{\textbf{Task 3: Walking inside a square}} \\
\cmidrule(lr){2-5} \cmidrule(lr){6-8} \cmidrule(lr){9-11}
& \makecell{\textbf{R-Precision}\\ \textbf{(Top-3)}$\uparrow$}
& \textbf{FID}$\downarrow$
& \makecell{\textbf{Constraint}\\ \textbf{Error}$\downarrow$}
& \makecell{\textbf{Unsuccess}\\ \textbf{Rate}$\downarrow$}
& \makecell{\textbf{Skating}\\\textbf{Ratio}$\downarrow$}
& \makecell{\textbf{Max}\\\textbf{Acceleration}$\downarrow$}
& \makecell{\textbf{Constraint}\\\textbf{Error}$\downarrow$}
& \makecell{\textbf{Skating}\\\textbf{Ratio}$\downarrow$}
& \makecell{\textbf{Max}\\\textbf{Acceleration}$\downarrow$}
& \makecell{\textbf{Constraint}\\\textbf{Error}$\downarrow$} \\
\midrule
ProgMoGen~\cite{liu2024progmogen}
& 0.601 & 0.596 & 0.011 & 0.092
& 0.220 & 0.153 & 0.073
& 0.121 & 0.093 & 0.014 \\
MaskControl~\cite{pinyoanuntapong2025maskcontrol}
& 0.694 & 0.326 & \textbf{0.000} & {0.002}
& 0.163 & 0.124 & 0.003
& \textbf{0.099} & 0.082 & 0.000 \\
MSCoT (ours)
& \textbf{0.712} & \textbf{0.189} & \textbf{0.000} & \textbf{0.000}
& \textbf{0.141} & \textbf{0.117} & \textbf{0.000}
& 0.101 & \textbf{0.072} & \textbf{0.000} \\
\bottomrule
\end{tabular}%
}
\label{tab:hsi_tasks_merged}
\vspace{-6ex}
\end{table*}

\subsection{Applications of MSCoT on Different Unseen Control Tasks}

Table~\ref{tab:hsi_tasks_merged} evaluates the generalizability of MSCoT on three human-scene interaction tasks using the protocol from ProgMoGen~\cite{liu2024progmogen}. In Task 1 (\emph{head height constraint}), the character head height is constrained while moving using the first, central, and last frames. In Task 2 (\emph{avoiding overhead barrier}), the character must duck under an overhead barrier in the middle of the sequence while standing normally at the beginning and the end. In Task 3 (\emph{walking inside a square}), the character is required to move while staying inside a square walkable region. Following ProgMoGen, we report Max Acceleration (maximum joint acceleration) to measure motion smoothness and joint jittering, Constraint Error to measure how much the generated motion deviates from the imposed constraints, and Unsuccess Rate to measure the ratio of sequences that violate the constraints. As can be observed from Table~\ref{tab:hsi_tasks_merged}, MSCoT demonstrates superior motion quality while driving the constraint error close to zero on all tasks. This confirms that our method can effectively generalize to diverse, unseen human-scene interaction scenarios by specifying the task objective at test time.

Fig.~\ref{fig:applications} shows three applications of MSCoT for controllable motion generation: (a) any-joint-any-time control, (b) obstacle avoidance, and (c) human-scene interaction. Our framework can generalize to different control tasks, i.e., at test time user can define task-specific objectives without retraining any model component. It is worth noting that our framework is not specifically designed and trained on these specific tasks, e.g., scene data. Our Supplementary Material and demonstration video provide more illustrative examples.

\begin{table*}[t]
\centering
\caption{Standard text-to-motion evaluation on the HumanML3D~\cite{guo2022humanml3d} and KIT-ML~\cite{plappert2016kitml} datasets. Standard metrics are adopted following Guo~\etal~\cite{guo2022humanml3d}. $\,\pm\,$ indicates a 95\% confidence interval. Best in \textbf{bold}, second best in \underline{underlined}.}
\vspace{-2ex}
\resizebox{\linewidth}{!}{%
\begin{tabular}{l c c c c | c c c c}
\toprule
\multirow{3}{*}{\textbf{Method}} & \multicolumn{4}{c|}{\makecell[c]{\textbf{HumanML3D}~\cite{guo2022humanml3d}}} & \multicolumn{4}{c}{\makecell[c]{\textbf{KIT-ML}~\cite{plappert2016kitml}}} \\
\cmidrule(lr){2-5} \cmidrule(lr){6-9}
~ & \multirow{2}{*}{\makecell{\textbf{R-Precision}\\\textbf{(top-3)}$\uparrow$}}
  & \textbf{FID}$\downarrow$
  & \multirow{2}{*}{\makecell{\textbf{Multimodal}\\\textbf{Dist}$\downarrow$}}
  & \multirow{2}{*}{\makecell{\textbf{Multi}\\\textbf{modality}$\uparrow$}}
  & \multirow{2}{*}{\makecell{\textbf{R-Precision}\\\textbf{(top-3)}$\uparrow$}}
  & \textbf{FID}$\downarrow$
  & \multirow{2}{*}{\makecell{\textbf{Multimodal}\\\textbf{Dist}$\downarrow$}}
  & \multirow{2}{*}{\makecell{\textbf{Multi}\\\textbf{modality}$\uparrow$}} \\
~ & & & & & & & & \\
\midrule
MDM~\cite{tevet2023mdm} & \et{0.611}{.007} & \et{0.544}{.044} & \et{5.566}{.027} & \etb{2.799}{.072} & \et{0.396}{.004} & \et{0.497}{.021} & \et{9.191}{.022} & \ets{1.907}{.214} \\
MLD~\cite{chen2023mld} & \et{0.772}{.002} & \et{0.473}{.013} & \et{3.196}{.010} & \et{2.413}{.079} & \et{0.734}{.007} & \et{0.404}{.027} & \et{3.204}{.027} & \etb{2.192}{.071} \\
MotionLCM~\cite{dai2024motionlcm} & \et{0.803}{.002} & \et{0.467}{.012} & \et{3.022}{.001} & \et{2.172}{.082} & \et{0.769}{.005} & \et{0.445}{.042} & \et{2.882}{.025} & \et{1.254}{.057} \\
T2M-GPT~\cite{zhang2023t2m_gpt} & \et{0.775}{.002} & \et{0.141}{.005} & \et{3.121}{.009} & \et{1.831}{.048} & \et{0.745}{.006} & \et{0.514}{.029} & \et{3.007}{.023} & \et{1.570}{.039} \\
ReMoDiffuse~\cite{zhang2023remodiffuse} & \et{0.795}{.004} & \et{0.103}{.004} & \et{2.974}{.016} & \et{1.795}{.043} & \et{0.765}{.055} & \etb{0.155}{.006} & \et{2.814}{.012} & \et{1.239}{.028} \\
MoMask~\cite{guo2024momask} & \et{0.807}{.002} & \ets{0.045}{.002} & \et{2.958}{.008} & \et{1.241}{.040} & \ets{0.781}{.005} & \et{0.204}{.011} & \et{2.779}{.022} & \et{1.131}{.043} \\
EnergyMoGen~\cite{zhang2025energymogen} & \ets{0.815}{.002} & \et{0.188}{.006} & \ets{2.915}{.007} & \et{2.205}{.041} & \et{0.772}{.006} & \et{0.495}{.020} & \et{2.861}{.020} & \et{1.256}{.024} \\
MARDM~\cite{meng2025mardm} & \et{0.811}{.003} & \et{0.061}{.003} & \et{2.986}{.009} & \et{2.235}{.040} & \et{0.768}{.005} & \et{0.244}{.014} & \ets{2.767}{.019} & \et{1.322}{.053} \\
\midrule
\textbf{MSCoT (ours)} & \etb{0.820}{.002} & \etb{0.042}{.003} & \etb{2.892}{.010} & \ets{2.620}{.041} & \etb{0.792}{.006} & \ets{0.172}{.018} & \etb{2.688}{.021} & \et{1.746}{.046} \\
\bottomrule
\end{tabular}
}
\label{tab:quantitative_eval}
\vspace{-2mm}
\end{table*}

\subsection{Text-to-Motion Generation}

We also evaluate our method in standard text-to-motion setting, where we do not apply token guidance since no control target is provided. We follow common practice on HumanML3D~\cite{guo2022humanml3d} and KIT-ML~\cite{plappert2016kitml}, and report the following metrics: FID and Multimodality to measure the overall quality and diversity of the motion; R-Precision and Multimodal Distance to evaluate the semantic alignment between input text prompt and generated motions. Table~\ref{tab:quantitative_eval} reports the mean and 95\% confidence interval for each metric, where each experiment is repeated 20 times. Overall, MSCoT demonstrates competitive performance on both datasets with notable improvements on metrics such as R-Precision, FID, and Multimodal distance. Compared to other VQ-based models, e.g., T2M-GPT and MoMask, MSCoT also attains substantially better Multimodality, indicating its ability to generate more diverse human motions given the same text prompts.

\subsection{Ablation Analysis}

\begin{table}[t]
\centering
\caption{Joint-controlled ablations on the effect of each component, guidance strategy, and codebook normalization.}
\vspace{-2ex}
\footnotesize
\setlength{\tabcolsep}{4pt}
\resizebox{\columnwidth}{!}{%
\begin{tabular}{lccccc}
\toprule
\multirow{2}{*}{\textbf{Method}}
& \multirow{2}{*}{\makecell{\textbf{R-Precision}\\ \textbf{(Top-3)}$\uparrow$}}
& \multirow{2}{*}{\textbf{FID}$\downarrow$}
& \multirow{2}{*}{\makecell{\textbf{Location Error}\\ $(>50\,\mathrm{cm})$ (\%) $\downarrow$}}
& \multirow{2}{*}{\makecell{\textbf{Average}\\ \textbf{Error} (cm)$\downarrow$}}
& \multirow{2}{*}{\makecell{\textbf{Run}\\ \textbf{time} (s)$\downarrow$}} \\
~ & ~ & ~ & ~ & ~ & ~ \\
\midrule
No Control (unconstrained)                  & 0.812 & 0.054 & 37.14 & 71.08 & 0.31 \\
Token Guidance, No Refinement               & 0.811 & 0.104 & 1.38 & 6.68 & 1.15 \\
Token Refinement, No Guidance               & 0.778 & 0.373 & 0.66 & 3.89 & 2.80 \\
\cmidrule(lr){1-6}
Full Model (Exact Posterior)     & 0.818 & 0.111 & 0.00 & 0.68 & 629.00 \\
Full model (Euclidean code)          & 0.804 & 0.204 & 0.02 & 1.92 & 3.60 \\
Full model ($\ell_2$-normalized code)       & 0.812 & 0.124 & 0.00 & 0.82 & 3.61 \\
\bottomrule
\end{tabular}%
}
\label{tab:control_ablation_singlecol}
\vspace{-5mm}
\end{table}

\noindent\textbf{Component analysis.} We study the effect of each component for joint control using the same setting as Sec.~\ref{subsec:joint_control}. We note that the FID under joint control is not directly comparable to the standard unconstrained text-to-motion FID because the control alters the sample distribution, which deviates from natural motion trajectories to satisfy the joint constraints. From Table~\ref{tab:control_ablation_singlecol}, the \textit{No Control} baseline fails to follow the control and results in large location and average errors, as expected. Introducing the \textit{Token Guidance} substantially improves control accuracy with a modest $0.8$s runtime overhead over the base model, confirming the effectiveness of our guidance strategy. However, without the refinement, it cannot exactly match the target joint positions due to the limited discrete codebook. In contrast, the \textit{Token Refinement, No Guidance} meets the targets, i.e., desirable control accuracy, but significantly degrades the FID. This is because the refinement may start from the generated unconstrained logit distributions that are mismatched with the targets, thereby leading to distorted motion that overfits to the joint positions locally. Our full model yields the best balance between the motion quality and control accuracy.

\noindent\textbf{Effect of guidance approximation and codebook normalization.} We also investigate the impact of our first-order approximated token guidance by comparing with an exact token posterior computation, brute-force over the codebook at each temporal position, in Eq.~\ref{eq:posterior}. Table~\ref{tab:control_ablation_singlecol} shows that the exact posterior computation yields better control results but is prohibitively slow. Our first-order strategy achieves comparable performance while being far more efficient, as it only requires a single forward-backward pass per scale. Moreover, we observe that using the standard Euclidean codebook results in worse control metrics compared to the $\ell_2$-normalized one. This aligns with our theoretical finding in Sec.~\ref{sub_theory_analysis} that constraining the embedding space can alleviate the approximation error and improve the guidance.

\begin{figure*}[t]
    \centering
    \includegraphics[width=\linewidth]{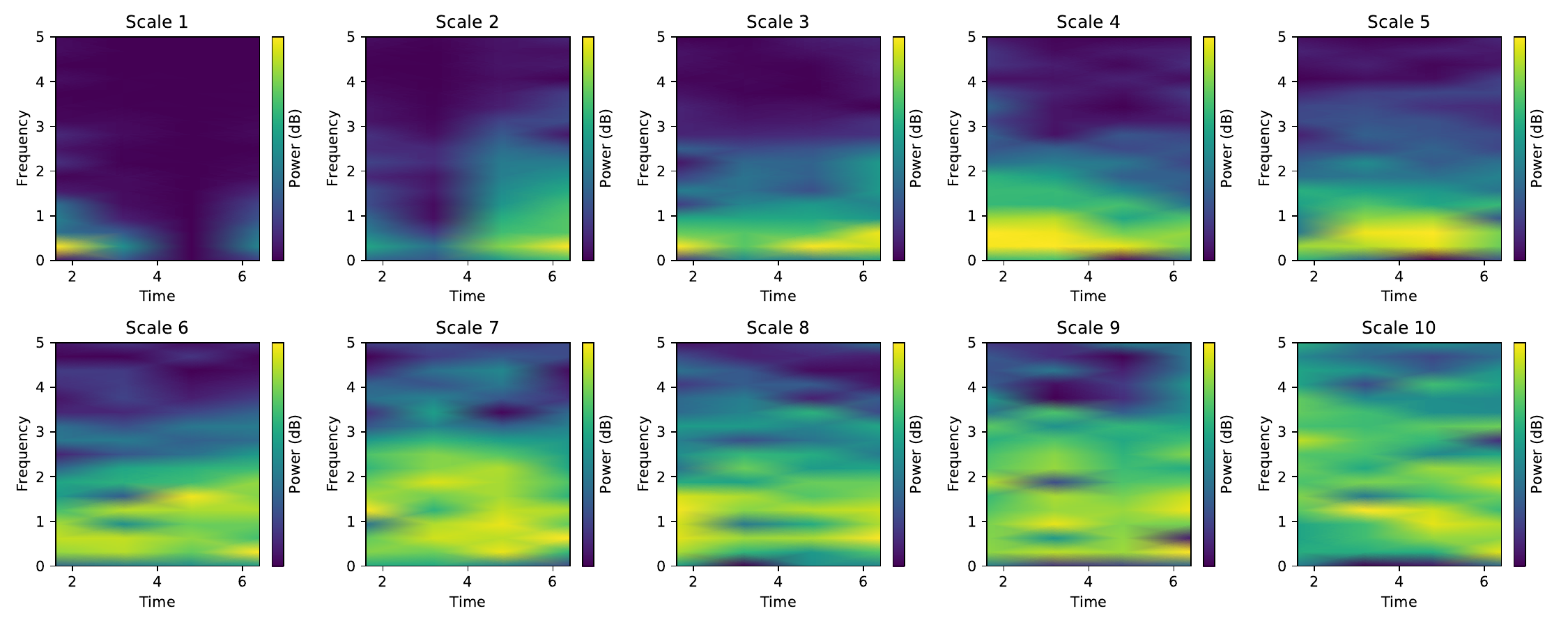}
    \vspace{-7mm}
    \caption{Time-frequency spectrogram of the intermediate motions across 10 scales. Darker region indicates lower energy of the human motion.}
    \label{fig:spectrogram}
    \vspace{-6.5mm}
\end{figure*}

\begin{figure*}[t]
    \centering
    \includegraphics[width=\linewidth]{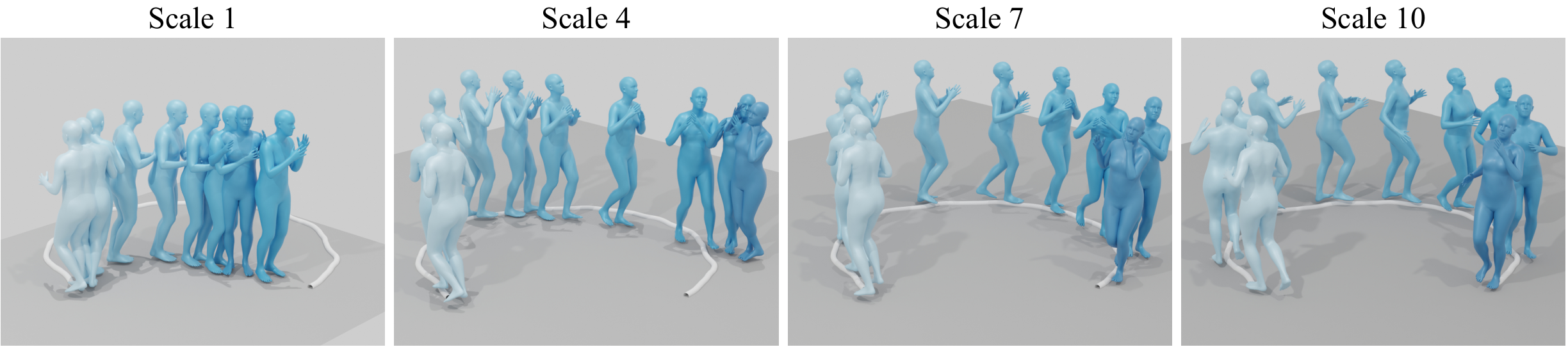}
    \vspace{-4mm}
    \caption{Example of the coarse-to-fine motion at each scale under our guidance. The motion becomes more realistic and better follows the control path after each scale.}
    \label{fig:vis_multiscale}
    \vspace{-6.5mm}
\end{figure*}

\noindent\textbf{Visualization of the multi-scale coarse-to-fine structure and guidance.} To illustrate our multi-scale motion hierarchy and its role in controllable generation, we visualize the motion-energy spectrogram in Fig.~\ref{fig:spectrogram} and show intermediate motion sequences at each scale under our guidance in Fig.~\ref{fig:vis_multiscale}. As shown in Fig.~\ref{fig:spectrogram}, motion energy is concentrated in low frequencies at coarse scales, reflecting global trajectory and gait, and progressively expands to mid- and high-frequency components at finer scales, capturing periodic motion and fine-grained details. Consistent with this trend, Fig.~\ref{fig:vis_multiscale} shows that motions become increasingly detailed and realistic across scales, while the global trajectory aligns more closely with the target control path. Overall, these visualizations indicate that our coarse-to-fine structure and guidance provide an intuitive and effective mechanism for generating motions that are both realistic and accurate with respect to the control signal. More illustrative results can be found in our demonstration video.
\section{Discussions}
\label{sec:Discussion}

{\bf Limitations.} While our method achieves encouraging results, it also has certain limitations. First, our guidance currently relies on a mean-field factorization method and a first-order approximation to make it tractable: although dense gradients from the decoder partially mitigate the factorization assumption, relaxing this is a promising direction for future work. Furthermore, the discrete codebook imposes representation limits and necessitates test-time refinement: increasing codebook capacity, e.g., using LFQ/FSQ~\cite{yu2023LFQ,mentzer2024fsq}, can mitigate this and enhance the control accuracy. Incorporating body structure-aware autoencoders~\cite{yuan2024mogents,hong2025salad} can better capture cross-joint dependencies and further improve fine-grained control. Finally, while MSCoT can handle variable-length motion, the multi-scale transformer requires computing attention context on the multi-scale prefix tokens, which makes it challenging for extremely long sequences: efficient long-context attention~\cite{han2024hyperattention} is a promising direction.

\noindent\textbf{Conclusion.} We presented MSCoT, an effective model for human motion synthesis and control at test time. MSCoT models human motion with a multi-scale discrete representation that captures different nuances at different temporal resolutions. A multi-scale autoregressive transformer is introduced to autoregressively generate and refine the motion details from coarse to fine scales effectively. With this multi-scale design, along with our multi-scale token guidance and refinement mechanism, MSCoT delivers fast, precise joint control and high-fidelity motion generation. To our knowledge, MSCoT is the first coarse-to-fine motion control paradigm that enables fast and flexible control without relying on additional control-specific training, e.g., ControlNet, allowing users to specify arbitrary control objectives in the form of goal likelihoods. Beyond controllable motion synthesis, we hope these ideas also have broader impacts and foster advancement in other domains, such as controllable image generation. Our code and trained model will be released to encourage future study.

\section*{Acknowledgements}

This research was supported by the Australian Research Council Discovery Project DP240101926 and the NVIDIA Academic Grant Program. Nhat Le is a recipient of the Australian Government Research Training Program (RTP) Scholarship, supported by the Commonwealth.

\bibliographystyle{splncs04}
\bibliography{bib}

\clearpage
\appendix
\setcounter{equation}{13}
\setcounter{figure}{6}
\setcounter{table}{4}

\onecolumn
\let\titleold\title
\renewcommand{\title}[1]{\titleold{#1}\newcommand{\thetitle}{#1}}
   {
   \newpage
    \centering
    \Large
    \textbf{Multi-scale Coarse-to-fine Modeling for \\ Test-time Human Motion Control}\\
    \vspace{0.5em}Appendix \\
    \vspace{1.0em}
   }

\section{Approximation error of our first-order strategy}
\label{sec:approx_error}

In this section, we provide a detailed derivation for Proposition~1 in the main paper.
We focus on a single token position $l$ at scale $k$ and omit the explicit position index $l$ for clarity.
All results are understood to hold per position and scale.
To support the derivation, we introduce the following notations.

\begin{itemize}
    \item $\pi(v) = p(\bm{z}_k = v \mid \bm{z}_{<k})$ denotes the prior probability of sampling token $v$ from the codebook, as predicted by the autoregressive model.
    \item $\bm{e}_v = \mathrm{Lookup}(\mathcal{Z}, v)$ denotes the embedding of codebook entry $v$ in $\mathbb{R}^d$.
    \item $\phi(\bm{e}) = \log p(G \mid \bm{z}_k, \bm{z}_{<k})$ denotes the goal log-likelihood as a function of the embedding $\bm{e} \in \mathbb{R}^d$, where $\bm{e}$ is obtained by looking up the codebook with the current token index.
\end{itemize}

\noindent We first recall the first-order Taylor approximation of $\phi$ around an arbitrary expansion point $\bm{a} \in \mathbb{R}^d$ (corresponding to Eq.~10 in the main paper)
\begin{equation}
    \phi(\bm{e})
    \approx
    (\bm{e} - \bm{a})^\intercal \nabla \phi(\bm{a}) + \phi(\bm{a}) \, ,
\end{equation}
Let
\begin{equation}
    \tilde{\phi}(\bm{e}, \bm{a})
    =
    (\bm{e} - \bm{a})^\intercal \nabla \phi(\bm{a}) + \phi(\bm{a})
\end{equation}
be the approximation term and define the Taylor remainder
\begin{equation}
    R(\bm{e}, \bm{a}) = \phi(\bm{e}) - \tilde{\phi}(\bm{e}, \bm{a}) \, ,
\end{equation}
By Taylor's theorem, the second-order remainder can be written in integral form as
\begin{equation}
   R(\bm{e}, \bm{a})
   =
   \int_0^1 (1 - t)\,
   (\bm{e} - \bm{a})^\intercal
   \nabla^2 \phi\big(\bm{a} + t(\bm{e} - \bm{a})\big)
   (\bm{e} - \bm{a})\,dt \, ,
\end{equation}
We assume that $\phi$ is twice differentiable with $C$-Lipschitz continuous gradients, which implies that the Hessian $\nabla^2 \phi$ is uniformly bounded along the line segment between $\bm{e}$ and $\bm{a}$
\begin{equation}
    \Vert \nabla^2 \phi(\bm{s}) \Vert_2 \leq C,
    \quad
    \bm{s} = \bm{a} + t(\bm{e} - \bm{a}),
    \quad
    \forall t \in [0, 1] \, ,
\end{equation}
where $\Vert \cdot \Vert_2$ denotes the spectral norm for matrices and the Euclidean norm for vectors, $C$ is the Lipschitz constant.
Hence the quadratic form appearing in the remainder is bounded as
\begin{equation}
    (\bm{e} - \bm{a})^\intercal
    \nabla^2 \phi\big(\bm{s}\big)
    (\bm{e} - \bm{a})
    \leq
    C \Vert \bm{e} - \bm{a} \Vert_2^2 \, ,
\end{equation}
Plugging this into the integral expression above yields
\begin{align}
    \vert R(\bm{e}, \bm{a})\vert
    &\leq
    \int_0^1 (1 - t)\,
    C \Vert \bm{e} - \bm{a} \Vert_2^2\,dt \notag\\
    &=
    C \Vert \bm{e} - \bm{a} \Vert_2^2 \int_0^1 (1 - t)\,dt \notag\\
    &=
    \frac{C}{2}\,\Vert \bm{e} - \bm{a} \Vert_2^2 \, ,
    \label{eq:remainder}
\end{align}
Next we define the position-wise posteriors that appear in the conditional likelihood $p(\bm{z}_k \mid \bm{z}_{<k}, G)$ (Eq.~9 in the main paper).
We denote $p^*$ as the original (true) goal-guided posterior at this position and by $q_a$ the approximate posterior following our first-order strategy around the expansion point $\bm{a}$.
Omitting the temporal position index for simplicity, the true posterior probability of token $v$ is
\begin{align}
    p^*(v)
    &= \frac{p(\bm{z}_k = v \mid \bm{z}_{<k})\,p(G \mid \bm{z}_k = v, \bm{z}_{<k})}
    {\sum_{v' \in V} p(\bm{z}_k = v' \mid \bm{z}_{<k})\,p(G \mid \bm{z}_k = v', \bm{z}_{<k})} \notag\\
    &= \frac{\pi(v)\,\exp\big(\phi(\bm{e}_v)\big)}
    {\sum_{v' \in V} \pi(v')\,\exp\big(\phi(\bm{e}_{v'})\big)} \notag\\
    &= \frac{\pi(v)\,\exp\big(\phi(\bm{e}_v)\big)}{Z^*} \, ,
\end{align}
where $Z^* = \sum_{v' \in V} \pi(v')\,\exp\big(\phi(\bm{e}_{v'})\big)$ is the normalizing constant for the true posterior $p^*$.
Similarly, the approximate posterior obtained from the first-order approximation is
\begin{align}
    q_a(v)
    &= \frac{\pi(v)\,\exp\big(\tilde{\phi}(\bm{e}_v, \bm{a})\big)}
    {\sum_{v' \in V} \pi(v')\,\exp\big(\tilde{\phi}(\bm{e}_{v'}, \bm{a})\big)} \notag\\
    &= \frac{\pi(v)\,\exp\big(\tilde{\phi}(\bm{e}_v, \bm{a})\big)}{Z_a} \, ,
\end{align}
with $Z_a = \sum_{v' \in V} \pi(v')\,\exp\big(\tilde{\phi}(\bm{e}_{v'}, \bm{a})\big)$ the normalizing constant for the approximated posterior $q_a$.
To compute the KL divergence between $p^*$ and $q_a$, we first write the ratio
\begin{equation}
    \label{eq:pq_ratio}
    \frac{p^*(v)}{q_a(v)}
    =
    \frac{\exp\big(\phi(\bm{e}_v)\big)}{\exp\big(\tilde{\phi}(\bm{e}_v, \bm{a})\big)}
    \cdot
    \frac{Z_a}{Z^*} \, ,
\end{equation}
We expand the second factor as
\begin{align}
    \frac{Z_a}{Z^*}
    &= \frac{Z_a}{\sum_{v'} \pi(v')\,\exp\big(\phi(\bm{e}_{v'})\big)} \notag\\
    &= \frac{Z_a}{\sum_{v'} \pi(v')\,\exp\big(\tilde{\phi}(\bm{e}_{v'}, \bm{a})\big)\,\exp\big(R(\bm{e}_{v'}, \bm{a})\big)} \notag\\
    &= \frac{\cancel{Z_a}}
    {\cancel{Z_a}\,\sum_{v'} \frac{\pi(v')\,\exp\big(\tilde{\phi}(\bm{e}_{v'}, \bm{a})\big)}{Z_a}\,\exp\big(R(\bm{e}_{v'}, \bm{a})\big)} \notag\\
    &= \frac{1}{\sum_{v'} q_a(v')\,\exp\big(R(\bm{e}_{v'}, \bm{a})\big)} \notag\\
    &= \frac{1}{\mathbb{E}_{q_a}\big[\exp\big(R(\bm{e}, \bm{a})\big)\big]} \, ,
\end{align}
Plugging this into \eqref{eq:pq_ratio} gives
\begin{align}
     \frac{p^*(v)}{q_a(v)}
     &= \exp\big(\phi(\bm{e}_v) - \tilde{\phi}(\bm{e}_v, \bm{a})\big)\,
     \frac{1}{\mathbb{E}_{q_a}\big[\exp\big(R(\bm{e}, \bm{a})\big)\big]} \notag\\
     &= \frac{\exp\big(R(\bm{e}_v, \bm{a})\big)}
     {\mathbb{E}_{q_a}\big[\exp\big(R(\bm{e}, \bm{a})\big)\big]} \, ,
     \label{eq:pq_ratio_e}
\end{align}
The KL divergence between the true and approximate posteriors can then be written as
\begin{align}
    D_{\mathrm{KL}}(p^* \Vert q_a)
    &= \sum_v p^*(v)\,\log\frac{p^*(v)}{q_a(v)} \notag\\
    &= \sum_v p^*(v)\,\log\frac{\exp\big(R(\bm{e}_v, \bm{a})\big)}
    {\mathbb{E}_{q_a}\big[\exp\big(R(\bm{e}, \bm{a})\big)\big]} \notag\\
    &= \sum_v p^*(v)\,\log \exp\big(R(\bm{e}_v, \bm{a})\big)
       - \sum_v p^*(v)\,\log \mathbb{E}_{q_a}\big[\exp\big(R(\bm{e}, \bm{a})\big)\big] \notag\\
    &= \mathbb{E}_{p^*}\big[R(\bm{e}, \bm{a})\big]
       - \log \mathbb{E}_{q_a}\big[\exp\big(R(\bm{e}, \bm{a})\big)\big] \, ,
    \label{eq:kl_pq_1}
\end{align}
From \eqref{eq:remainder}, the Taylor remainder is bounded as
\begin{equation}
    -\frac{C}{2}\,\Vert \bm{e} - \bm{a} \Vert_2^2
    \le
    R(\bm{e}, \bm{a})
    \le
    \frac{C}{2}\,\Vert \bm{e} - \bm{a} \Vert_2^2 \, ,
\end{equation}
We first bound the expectation under $p^*$ in \eqref{eq:kl_pq_1}
\begin{align}
    R(\bm{e}, \bm{a})
    &\leq
    \frac{C}{2}\,\Vert \bm{e} - \bm{a} \Vert_2^2 \notag\\
    \implies
    \mathbb{E}_{p^*}\big[R(\bm{e}, \bm{a})\big]
    &\leq
    \frac{C}{2}\,\mathbb{E}_{p^*}\big[\Vert \bm{e} - \bm{a} \Vert_2^2\big] \, ,
    \label{eq:kl_term1_bound}
\end{align}
We then bound the second term in \eqref{eq:kl_pq_1}.
Using the lower bound on the remainder we have
\begin{align}
    R(\bm{e}, \bm{a})
    &\geq
    -\frac{C}{2}\,\Vert \bm{e} - \bm{a} \Vert_2^2 \notag\\
    \implies
    \exp\big(R(\bm{e}, \bm{a})\big)
    &\geq
    \exp\Big(-\frac{C}{2}\,\Vert \bm{e} - \bm{a} \Vert_2^2\Big) \, ,
\end{align}
Taking expectations under $q_a$ and applying Jensen's inequality to the convex function $x \mapsto \exp(-x)$ yields
\begin{align}
\mathbb{E}_{q_a}\big[\exp\big(R(\bm{e}, \bm{a})\big)\big]
&\geq
\mathbb{E}_{q_a}\Big[\exp\Big(-\frac{C}{2}\,\Vert \bm{e} - \bm{a} \Vert_2^2\Big)\Big] \notag\\
&\geq
\exp\Big(-\frac{C}{2}\,\mathbb{E}_{q_a}\big[\Vert \bm{e} - \bm{a} \Vert_2^2\big]\Big) \, ,
\end{align}
Therefore
\begin{align}
    -\log \mathbb{E}_{q_a}\big[\exp\big(R(\bm{e}, \bm{a})\big)\big]
    &\leq
    -\log \exp\Big(-\frac{C}{2}\,\mathbb{E}_{q_a}\big[\Vert \bm{e} - \bm{a} \Vert_2^2\big]\Big) \notag\\
    &=
    \frac{C}{2}\,\mathbb{E}_{q_a}\big[\Vert \bm{e} - \bm{a} \Vert_2^2\big] \, ,
    \label{eq:kl_term2_bound}
\end{align}
Combining \eqref{eq:kl_pq_1} with \eqref{eq:kl_term1_bound} and \eqref{eq:kl_term2_bound} gives the following upper bound on the KL divergence
\begin{equation}
     D_{\mathrm{KL}}(p^* \Vert q_a)
     \leq
     \frac{C}{2}\Big(
     \mathbb{E}_{p^*}\big[\Vert \bm{e} - \bm{a} \Vert_2^2\big]
     +
     \mathbb{E}_{q_a}\big[\Vert \bm{e} - \bm{a} \Vert_2^2\big]
     \Big) \, ,
     \label{eq:kl_bound}
\end{equation}
We can simplify this bound by considering the fact that
\(\Vert \bm{e}_v - \bm{a} \Vert_2^2 \leq \sup_{v} \Vert \bm{e}_v - \bm{a} \Vert_2^2\) for all $v \in V$, we further obtain
\begin{equation}
    D_{\mathrm{KL}}(p^* \Vert q_a)
    \leq
    C \sup_{v \in V} \Vert \bm{e}_v - \bm{a} \Vert_2^2 \, .
\end{equation}
This bound shows that the KL divergence between the true and approximate posteriors at a token position is controlled by the distance between the expansion point $\bm{a}$ and the codebook embeddings $\{\bm{e}_v\}$.
In practice, we choose the prior expected embedding $\bar{\bm{e}}$ as the expansion point to keep $\Vert \bm{e}_v - \bm{a} \Vert_2$ small in expectation and we use $\ell_2$-normalized codebook vectors to keep the codebook within a reasonable region of the embedding space.

\section{MSCoT model details}
\label{sec:model_details}

\subsection{More details on multi-scale motion tokenization and generation with multi-scale token guidance}
\label{sec:tokenization}

\textbf{Multi-scale motion tokenization.}
For completeness, we summarize the full multi-scale motion tokenization pipeline implemented by our VQ-VAE.
Given an input motion sequence $\bm{x} \in \mathbb{R}^{T \times D}$, the encoder $\mathcal{E}$ first produces a latent feature
sequence $\bm{f} = \mathcal{E}(\bm{x}) \in \mathbb{R}^{L \times d}$.
We then apply the residual multi-scale quantization scheme described in Sec.~3.1:
starting from the initial residual $\bm{r}_1 = \bm{f}$, at each scale $k$ we downsample the residual,
quantize it with the shared codebook $\mathcal{Z} \in \mathbb{R}^{V \times d}$, and upsample the selected embeddings back to
the finest length $L_K$:
\begin{equation}
    \bm{u}_k = \mathrm{Down}_k(\bm{r}_k), \quad
    \bm{z}_k^{(l)} = \arg\min_{v \in \{1,\ldots,V\}}
    \big\Vert \mathrm{Lookup}(\mathcal{Z}, v) - \bm{u}_k^{(l)} \big\Vert_2,
\end{equation}
\begin{equation}
    \label{eq:supp_fhat_k}
    \hat{\bm{f}}_k = \mathrm{Up}_K\big(\mathrm{Lookup}(\mathcal{Z}, \bm{z}_k)\big), \qquad
    \bm{r}_{k+1} = \bm{r}_k - \hat{\bm{f}}_k,
\end{equation}
where $l$ indexes the temporal position at scale $k$, and $\mathrm{Down}_k(\cdot)$ / $\mathrm{Up}_K(\cdot)$ denote linear
temporal down/upsampling.
As in the main paper, we append a small $1$D convolution after each upsampling step to compensate for interpolation
artifacts.
The final approximation of the latent feature sequence is obtained by summing the contributions from all $K$ scales,
which is then fed to the decoder $\mathcal{D}$ to reconstruct the input motion:
\begin{equation}
    \label{eq:vae_decode}
    \hat{\bm{f}} = \sum_{k=1}^K \hat{\bm{f}}_k, \qquad
    \hat{\bm{x}} = \mathcal{D}(\hat{\bm{f}}).
\end{equation}
The complete encoding and decoding procedures are summarized in
Algorithm~\ref{alg:enc} (multi-scale motion VQ-VAE encoding) and Algorithm~\ref{alg:dec} (multi-scale motion VQ-VAE decoding),
which directly implement the residual quantization scheme in Eq.~\ref{eq:supp_fhat_k} and Eq.~\ref{eq:vae_decode}.
Following~\cite{tian2024var}, our multi-scale VQVAE is trained using a compound loss objective that combines motion reconstruction with
codebook embedding losses at every scale.
Let $\hat{\bm{f}}_k$ denote the upsampled quantized features at scale $k$, and
$\bm{u}_k = \mathrm{Down}_k(\bm{f})$ the corresponding encoder features before quantization, the compound objective is as follows:
\begin{equation}
   \mathcal{L}_{\rm vq} = \Vert\bm{x} -\hat{\bm{x}}\Vert_2 +
   \sum_{k=1}^K\Vert {\rm sg}[\bm{f}_k] - \hat{\bm{f}}_k \Vert_2 +
   \beta\sum_{k=1}^K\Vert \bm{f}_k -{\rm sg}[\hat{\bm{f}}_k] \Vert_2,
\end{equation}
where ${\rm sg}[\cdot]$ denotes the stop-gradient operation, $\mathcal{L}_{\rm rec}=\Vert\bm{x} -\hat{\bm{x}}\Vert_2$ is the motion reconstruction loss, $ \mathcal{L}_{\rm code} = \sum_{k=1}^K\Vert {\rm sg}[\bm{f}_k] - \hat{\bm{f}}_k \Vert_2$ is the embedding loss for updating the codebook, and $\mathcal{L}_{\rm commit} = \sum_{k=1}^K\Vert \bm{f}_k -{\rm sg}[\hat{\bm{f}}_k] \Vert_2$ is the commitment loss with a weighting factor $\beta$ to encourage the encoder's output embeddings to stay close to the codebook. Subsequently, the token refiner is trained on top of the frozen VQVAE using the reconstruction objective $\mathcal{L}_{\rm rec}$ to refine the discrete codebook embedding.
Once trained, the VQ encoder–quantizer–decoder $(\mathcal{E}, \mathcal{Q}, \mathcal{D})$ is kept frozen and used to tokenize
human motion into multi-scale discrete sequences $\{\bm{z}_1,\ldots,\bm{z}_K\}$ for the transformer.

\begin{center}
\centering
\begin{minipage}[t]{0.55\linewidth}
  \scalebox{0.84}{
  \begin{algorithm}[H]
    \caption{\small{~Multi-scale Motion VQVAE Encoding}} \label{alg:enc}
    \small{
    \textbf{Input:} motion $\bm{x}\in\mathbb{R}^{T\times D}$\;
    \textbf{Require:} steps $K$, scale schedule $\{L_k\}_{k=1}^{K}$, codebook $\mathcal{Z}$ with size $V$\;
    $\bm{f} \leftarrow \mathcal{E}(\bm{x}) \in \mathbb{R}^{L\times d}$\;
    $\bm{r}_1 \leftarrow \bm{f}$,\quad $\bm{z} \leftarrow [\ ]$\;
    \For{$k=1,\cdots,K$}{
        $\bm{u}_k \leftarrow \mathrm{Down}_k(\bm{r}_k) \in \mathbb{R}^{L_k\times d}$\;
        $\bm{z}_k^{(l)} \leftarrow \arg\min\limits_{v} \left\| \mathrm{Lookup}(\mathcal{Z},v) - \bm{u}_k^{(l)} \right\|_2, \forall l\in[1,L_k]$\;
        $\bm{z} \leftarrow \text{append}(\bm{z},\,\bm{z}_k)$\;
        $\bm{e}_k \leftarrow \mathrm{Lookup}(\mathcal{Z},\bm{z}_k) \in \mathbb{R}^{L_k\times d}$\;
        $\hat{\bm{f}}_k \leftarrow \mathrm{Up}_{K}(\bm{e}_k)$; \tcp{upsample to $L_K$}
        $\hat{\bm{f}}_k \leftarrow \mathrm{Conv1D}(\hat{\bm{f}}_k)$\;
        $\bm{r}_{k+1} \leftarrow \bm{r}_k - \hat{\bm{f}}_k$\;
    }
    \textbf{Return: } multi-scale token sequences $\bm{z}=\{\bm{z}_1,\ldots,\bm{z}_K\}$\;
    }
  \end{algorithm}
  }
\end{minipage}%
\hspace{-10mm}
\begin{minipage}[t]{0.48\linewidth}
  \scalebox{1.0}{
  \begin{algorithm}[H]
    \caption{\small{~Multi-scale Motion VQVAE Decoding}} \label{alg:dec}
    \small{
    \textbf{Input:} multi-scale tokens $\bm{z}=\{\bm{z}_1,\ldots,\bm{z}_K\}$\;
    \textbf{Require:} steps $K$, scale schedule $\{L_k\}_{k=1}^{K}$, codebook $\mathcal{Z}$ with size $V$\;
    $\hat{\bm{f}} \leftarrow \bm{0} \in \mathbb{R}^{L_K\times d}$\;
    \For{$k=1,\cdots,K$}{
        $\bm{e}_k \leftarrow \mathrm{Lookup}(\mathcal{Z},\bm{z}_k) \in \mathbb{R}^{L_k\times d}$\;
        $\hat{\bm{f}}_k \leftarrow \mathrm{Up}_{K}(\bm{e}_k)$\;
        $\hat{\bm{f}}_k \leftarrow \mathrm{Conv1D}(\hat{\bm{f}}_k)$\;
        $\hat{\bm{f}} \leftarrow \hat{\bm{f}} + \hat{\bm{f}}_k$\;
    }
    $\hat{\bm{x}} \leftarrow \mathcal{D}(\hat{\bm{f}})$\;
    \textbf{Return: } reconstructed motion $\hat{\bm{x}}\in\mathbb{R}^{T\times D}$\;
    }
  \end{algorithm}
  }
\end{minipage}
\end{center}

\noindent\textbf{Multi-scale Autoregressive Transformer.}
Given the quantized motion tokens, we train a multi-scale autoregressive transformer to model the conditional
autoregressive likelihood (Eq.~1 main paper). Transformers are well-suited for this task since they can effectively capture the long-range temporal dependencies between multiple scales.
\begin{equation}
    \label{eq:ar_likelihood}
    p(\bm{z}_1,\bm{z}_2,\ldots,\bm{z}_K\vert \bm{y}) = \prod_{k=1}^K p(\bm{z}_k \vert \bm{z}_{<k};\bm{y}),
\end{equation}
At scale $k$, the transformer takes as input the concatenation of tokens from all coarser scales
$\bm{z}_{<k} = \{\bm{z}_1,\ldots,\bm{z}_{k-1}\}$ together with the text prompt $\bm{y}$, and predicts a categorical
distribution over tokens for the entire sequence $\bm{z}_k$ in parallel. Concretely, letting $\bm{z}_k^{(l)}$ denote the token at position $l$ and scale $k$, the training objective is the
cross-entropy over all scales and positions:
\begin{equation}
     \label{eq:supp_loss_ar}
    \mathcal{L}_{\rm trans}
= - \sum_{k=1}^{K} \sum_{l=1}^{L_k}
\log p(\bm{z}_{k}^{(l)} | \bm{z}_{<k}^{(l)}, y).
\end{equation}
During training, we adopt the adaptive scale schedule described in Sec.~3.1,
so that the same transformer can handle variable-length motions while preserving the coarse-to-fine scale ordering.
In practice, we extract the output token embeddings from the previous scale $k-1$ via codebook lookup and then upsample them to the next scale length of $L'_k$. This upsampled sequence is then appended to the token sequence from earlier scales to form the prefix for predicting scale $k$, where the last $L'_k$ output tokens are taken as the final output for that scale. By doing so, the predicted sequence at the final scale will be of length $L'_K$, which matches the target length $L'$. The generated motion is then obtained by decoding these multi-scale token sequences as in Eq.~\ref{eq:vae_decode}. Our strategy supports variable length sequence generation thanks to the temporal convolutional VQVAE architecture with local convolution kernels, which can generalize and naturally map the feature sequences of extended target size into the output motions with the corresponding length.

\noindent\textbf{Autoregressive generation with multi-scale token guidance.}
The intermediate sequence at each scale of the multi-scale quantized representation captures the context of the whole motion in a coarse-to-fine manner. We exploit this property to guide the generated token distribution to progressively enforce the generated motion to follow the target goal throughout the scales. Intuitively, this can be viewed as gradually adding up motion details scale-by-scale to achieve the target until the final scale.
At inference time, given the control goal objective $G(\hat{\bm{x}})$ defined by user, we flexibly plug in our multi-scale token guidance into the sampling process. The motion $\bm{x}$ is obtained by decoding the token embeddings through the decoder.
Given a text prompt $\bm{y}$ and a goal function $G(\hat{\bm{x}})$, we first run the multi-scale transformer to obtain
the prior distributions $p_\theta(\bm{z}_k \vert \bm{z}_{<k}, \bm{y})$ at each scale.
We then update these priors into goal-guided posteriors
$p(\bm{z}_k \vert \bm{z}_{<k}, G)$ using the first-order token guidance derived in Sec.~3.2 (main paper):
for each scale $k$, we construct the prior mean embedding sequence
$\bar{\bm{e}}_k$ (Eq.~11 main paper), decode it once through the frozen VQ-VAE to obtain the intermediate motion,
evaluate $G(\hat{\bm{x}})$, and backpropagate $\nabla_{\bar{\bm{e}}_k} \log p(G \vert \hat{\bm{x}})$.
The resulting gradients are then used to reweight the token probabilities at every position according to the
approximate posterior in Eq.~9 (main paper), from which we sample the guided tokens $\bm{z}_k$.

Finally, we apply the token refiner on the selected codebook embeddings at each scale to obtain residuals
$\Delta \bm{e}_k$, and decode the refined embeddings $\tilde{\bm{e}}_k = \bm{e}_k + \Delta \bm{e}_k$ back to motion
space via the VQ-VAE decoder.
We optionally perform several refinement iterations at the last scale by directly maximizing
the goal $G(\hat{\bm{x}})$ with respect to the residuals, as detailed in Sec.~3.3.
The complete controllable generation procedure, including multi-scale token guidance and token refinement, is
summarized in Algorithm~\ref{alg:guidance}.
Together, Algorithm~\ref{alg:enc},~\ref{alg:dec}, and~\ref{alg:guidance} provide an explicit implementation of the multi-scale motion tokenization and multi-scale coarse-to-fine guided generation described in the main paper.

\begin{algorithm}[H]
  \caption{\small{~Multi-scale Coarse-to-fine Guidance for Control}} \label{alg:guidance}
\small{
  \textbf{Input:} text $\bm{y}$, goal $G(\cdot)$\;
  \textbf{Require:} steps $K$, scale schedule $\{L'_k\}_{k=1}^{K}$, codebook $\mathcal{Z}$, Multi-scale Transformer, TokenRefiner, decoder $\mathcal{D}$\;
  \textbf{Init:} $\bm{z}_{<1}\!\leftarrow\{\text{\texttt{[s]}}\}$\; $\hat{\bm{f}}\!\leftarrow\bm{0}\in\mathbb{R}^{L_K\times d}$\;

  \For{$k=1,\cdots,K$}{
    predict prior $p(\bm{z}_k\mid \bm{z}_{<k};\bm{y}) = {\rm Transformer}(\bm{z}_{<k},\bm{y})$ (with length $L'_k$)\;
    \tcp{begin guidance}
    $\bar{\bm{e}}_k^{(l)} \leftarrow \sum_{v} p(z_k^{(l)}\!=\!v\mid \bm{z}_{<k};\bm{y})\cdot \mathrm{Lookup}(\mathcal{Z},v)$, for all $l\in[1,L'_k]$\;
    $\hat{\bm{x}} \leftarrow \mathcal{D}\!\big(\hat{\bm{f}} + \mathrm{Conv1D}(\mathrm{Up}_K(\bar{\bm{e}}_k))\big)$\;
    compute $\nabla_{\bar{\bm{e}}_k^{(l)}}\log p(G\mid \hat{\bm{x}})$, for all $l\in[1,L'_k]$\;
    update $p(\bm{z}_k^{(l)}\mid \bm{z}_{<k},G)$ via first-order (Eq.~10) into posterior (Eq.~9)\;
    sample $\bm{z}_k^{(l)} \sim p(\bm{z}_k^{(l)}\mid \bm{z}_{<k},G)$, for all $l\in[1,L'_k]$\;
    \tcp{end guidance}
    $\bm{e}_k \leftarrow \mathrm{Lookup}(\mathcal{Z},\bm{z}_k)$\;
    $\Delta\bm{e}_k \leftarrow {\rm TokenRefiner}(\bm{e}_k)$\; $\tilde{\bm{e}}_k \leftarrow \bm{e}_k+\Delta\bm{e}_k$\;
    $\hat{\bm{f}} \leftarrow \hat{\bm{f}} + \mathrm{Conv1D}(\mathrm{Up}_K(\tilde{\bm{e}}_k))$\;
    $\bm{z}_{<k+1} \leftarrow \text{append}(\bm{z}_{<k},\bm{z}_k)$\;
  }

  \textbf{Refinement:} residuals $\Delta\bm{e}=\{\Delta\bm{e}_k\}_{k=1}^K$, update step size $\kappa$, iterations $I$\;
  \For{$I$ {\rm iterations}}{
    $\hat{\bm{f}}(\bm{e}+\Delta\bm{e}) = \sum_{k=1}^{K}\mathrm{Conv1D}(\mathrm{Up}_K\big(\bm{e}_k+\Delta\bm{e}_k\big))$\;
    $\hat{\bm{x}} \leftarrow \mathcal{D}(\hat{\bm{f}})$\;
    $\Delta\bm{e} \leftarrow \Delta\bm{e} + \kappa\,\nabla_{\Delta\bm{e}}\log p(G\mid \hat{\bm{x}})$
  }
  \textbf{Return:} controlled motion $\hat{\bm{x}}=\mathcal{D}(\hat{\bm{f}})$\;
}
\end{algorithm}

\subsection{Implementation details}
\label{sec:impl_details}
The VQ Autoencoder adopts a 1D temporal convolutional encoder and decoder backbone with 4$\times$ downsampling rate, similar to~\cite{guo2024momask,zhang2023t2m_gpt}.
By default, our multi-scale quantization follows a base schedule of $K=10$ scales (1, 2, 3, 4, 5, 6, 8, 10, 13, 16), where the last scale corresponds to $64$ frames during quantization training. The codebook size is $V=1024$ with token embedding dimension $d=512$. We also apply quantization dropout~\cite{zeghidour2021quant_dropout,guo2024momask} to enhance the expressiveness of the quantized representation, where the last $1$ to $K$ scales are randomly disabled with probability 0.2, while the remaining scales are enforced to reconstruct the original motion during training. The commitment term is $\beta=0.02$. The autoregressive transformer has 6 layers with a model embedding size of $384$ and $6$ attention heads, and is trained on varying sequence sizes from 40 to 196 frames depending on the sample.
The token refiner is a 2-layer transformer with embedding size $256$. All experiments are done on a single NVIDIA 4090 GPU, and models are trained using AdamW optimizer~\cite{loshchilov2018adamw} with learning rate $2\times10^{-4}$; batch size $512$ for the motion tokenizer and $128$ for the autoregressive transformer. At inference, we also use classifier-free guidance with guidance weight $5.0$ on the predicted logits (before the control guidance), and apply test-time refinement optimization on the control goal (when the control target is provided) for $I=200$ iterations with step size $\kappa=0.01$ by default.
The testing batch size is 32 following previous evaluation protocols~\cite{guo2022humanml3d,xie2024omnicontrol}.

\subsection{Inference cost of each component}
\vspace{-4mm}
\label{sec:inference_cost}
\begin{table}[h]
\caption{Inference cost of each component}
\vspace{-1mm}
\centering
\setlength{\tabcolsep}{8pt}
\resizebox{0.85\linewidth}{!}{%
\begin{tabular}{l c c c c c}
\toprule
\multirow{2}{*}{}
& \multirow{2}{*}{\makecell{\textbf{Base} \\ \textbf{Model}}}
& \multirow{2}{*}{\makecell{\textbf{Token} \\ \textbf{Guidance}}}
& \multirow{2}{*}{\makecell{\textbf{Token} \\ \textbf{Refiner}}}
& \multirow{2}{*}{\makecell{\textbf{Refinement} \\ \textbf{Optimization}}}
& \multirow{2}{*}{\textbf{Total}} \\
& & & & & \\
\midrule
\textbf{Runtime (s)}
& 0.31 & 0.83 & 0.05 & 2.43 & 3.61 \\
\bottomrule
\end{tabular}%
}
\label{tab:component_speed_horizontal}
\vspace{-6mm}
\end{table}

\begin{table}[h]
\caption{MSCoT model size.}
\vspace{-1mm}
\centering
\setlength{\tabcolsep}{8pt}
\resizebox{0.9\linewidth}{!}{%
\begin{tabular}{l c c c c c}
\toprule
\multirow{2}{*}{}
& \multirow{2}{*}{\makecell{\textbf{Encoder \&} \\ \textbf{Decoder}}}
& \multirow{2}{*}{\makecell{\textbf{VQVAE} \\ \textbf{Quantizer}}}
& \multirow{2}{*}{\makecell{\textbf{Multi-scale} \\ \textbf{Transformer}}}
& \multirow{2}{*}{\makecell{\textbf{Token} \\ \textbf{Refiner}}}
& \multirow{2}{*}{\makecell{\textbf{Total} \\ \textbf{(trainable)}}} \\
& & & & & \\
\midrule
\textbf{\#Params (M)}
& 19.44 & 6.30 & 24.51 & 2.72 & 52.97 \\
\bottomrule
\end{tabular}%
}
\label{tab:model_size_horizontal}
\end{table}

\noindent We report the inference cost and model size of MSCoT in Tables~\ref{tab:component_speed_horizontal} and~\ref{tab:model_size_horizontal}. All timings are measured at the default setting ($K{=}10$, $I{=}200$ refinement steps) on a single GPU. The base multi-scale coarse-to-fine sampling and VQ-VAE decoding together take only $0.31$s, while the multi-scale token guidance adds $0.83$s, reflecting a modest overhead for a single forward–backward pass per scale. The token refiner contributes a negligible $0.05$s whereas the refinement performs iterative gradient-based updates on the token residual embeddings at the last scale, and therefore dominates the runtime with $2.43$s. These altogether yield a total inference time of $3.61$s. In terms of parameters, the encoder-decoder and multi-scale (codebook) quantizer account for $19.44$M and $6.30$M parameters, respectively, while the multi-scale transformer backbone has $24.51$M parameters and the token refiner adds $2.72$M. Overall, MSCoT uses $52.97$M trainable parameters, and the majority of the runtime overhead for controllable generation comes from the optional refinement stage rather than the guidance or base model.

\section{Additional experiments}
\label{sec:add_experiments}

\subsection{Ablation on adaptive scale for variable-length motions}
\label{sec:varlen}
\begin{table}[h]
\caption{Ablation study on variable-length sequence generation.}
\vspace{-1mm}
\centering
\setlength{\tabcolsep}{8pt}
\resizebox{0.9\linewidth}{!}{%
\begin{tabular}{c l c c c c}
\toprule
\multirow{2}{*}{\makecell{\textbf{Sequence}\\\textbf{Length}}} &
\multirow{2}{*}{\textbf{Method}} &
\multirow{2}{*}{\makecell{\textbf{R-Precision}\\\textbf{(Top-3)}$\uparrow$}} &
\multirow{2}{*}{\textbf{FID}$\downarrow$} &
\multirow{2}{*}{\makecell{\textbf{Multimodal}\\\textbf{Dist}$\downarrow$}} &
\multirow{2}{*}{\makecell{\textbf{Accel}\\(cm/s$^2$)$\downarrow$}} \\
~ & ~ & ~ & ~ & ~ & ~ \\
\midrule
\multirow{4}{*}{\textbf{64}}
& Real motion                    & 0.767 & 0.000 & 2.979 & 7.82 \\
\cmidrule(lr){2-6}
& Fixed-scale 64          & 0.715 & 0.823 & 3.344 & 9.17 \\
& Fixed-scale 196  & 0.692 & 1.251 & 3.489 & 9.29 \\
& MSCoT (adaptive scale)        & \textbf{0.761} & \textbf{0.424} & \textbf{3.031} & \textbf{7.61} \\
\midrule
\multirow{4}{*}{\textbf{128}}
& Real motion                    & 0.813 & 0.000 & 2.666 & 7.78 \\
\cmidrule(lr){2-6}
& Fixed-scale 64           & 0.753 & 0.575 & 2.993 & 11.24 \\
& Fixed-scale 196   & 0.782 & 0.638 & 3.116 & 8.46 \\
& MSCoT (adaptive scale)        & \textbf{0.815} & \textbf{0.083} & \textbf{2.680} & \textbf{7.25} \\
\midrule
\multirow{4}{*}{\textbf{196}}
& Real motion                    & 0.803 & 0.002 & 2.974 & 6.91 \\
\cmidrule(lr){2-6}
& Fixed-scale-64          & 0.776 & 0.392 & 3.207 & 12.48 \\
& Fixed-scale-196   & 0.762 & 0.204 & 3.187 & 8.34 \\
& MSCoT (adaptive scale)        & \textbf{0.818} & \textbf{0.042} & \textbf{2.892} & \textbf{6.96} \\
\bottomrule
\end{tabular}%
}
\label{tab:variable_length_ablation}
\end{table}

\noindent  We investigate the benefits of the adaptive scale scheduling for handling variable-length motions without retraining. Table~\ref{tab:variable_length_ablation} compares MSCoT to the baseline fixed-scale variants: \textit{(i)} Fixed-scale 64  trained on 64-frame chunks and longer sequence obtained via blending consecutive chunks following~\cite{shafir2024priormdm}; and \textit{(ii)} Fixed-scale 196 trained on 196-frame sequences (which is the longest sequence length in HumanML3D dataset) with fixed scale schedule. The ablation is conducted under three target length ranges (up to 64, 128, and 196 frames) on the HumanML3D test set.
In addition to the motion quality metrics (R-Precision, FID, and Multimodal Distance), we also report motion smoothness via \textit{Accel}~\cite{kanazawa2019accel}. We also report real motion for reference. We note that each setting is evaluated on the corresponding subset of motions that satisfies the target number of frames, so the R-Precision, FID, and Multimodal distance are not directly comparable across different length settings.
Across all lengths, as shown in Table~\ref{tab:variable_length_ablation}, the Fixed-scale 64 and the Fixed-scale 196 baselines consistently underperform MSCoT with adaptive scale. Stitching 64-frame windows with blending alleviates seams but still yields noticeably higher FID and acceleration, especially for 128- and 196-frame sequences, indicating temporal inconsistency and over-smoothed transitions. The Fixed-scale 196 variant performs better at its native length but significantly degrades on shorter sequences, and remains inferior to MSCoT even at 196 frames in both FID and Accel.
In contrast, MSCoT with adaptive scale scheduling maintains R-Precision and Accel close to real motion across all three different length ranges, while substantially reducing FID and multimodal distance, compared to the fixed-scale baselines. These results indicates that our adaptive multi-scale scheduling is crucial for robust and flexible variable-length motion synthesis at test time without requiring training different models for different temporal resolutions.

\subsection{Ablation on codebook size}
\label{sec:codebook_ablation}
\begin{table*}[h]
\centering
\caption{Ablation study on \textbf{non-shared} and \textbf{shared} codebook and embedding size on reconstruction and generation quality. The method name is defined as $(V{=}\text{codebook size}, d{=}\text{codebook dimension})$. \textbf{MPJPE} (Mean Per Joint Position Error) is measured in millimeters.}
\label{tab:vocab_ablation_both}
\vspace{-2mm}
\begin{subtable}[t]{0.48\textwidth}
\centering
\caption{\textbf{non-shared} codebook.}
\label{tab:vocab_ablation}
\setlength{\tabcolsep}{2pt}
\resizebox{\linewidth}{!}{%
\begin{tabular}{l cc ccc}
\toprule
\multirow{2}{*}{\textbf{Method}} &
\multicolumn{2}{c}{\textbf{Reconstruction}} &
\multicolumn{3}{c}{\textbf{Generation}} \\
\cmidrule(lr){2-3}\cmidrule(lr){4-6}
& \textbf{FID}$\downarrow$ & \textbf{MPJPE}$\downarrow$
& \makecell{\textbf{R-Precision}\\\textbf{(Top-3)}$\uparrow$}
& \textbf{FID}$\downarrow$
& \makecell{\textbf{Multimodal}\\\textbf{Dist}$\downarrow$} \\
\midrule
$(V{=}512, d{=}256)$  & 0.021 & 0.034 & 0.803 & 0.078 & 2.953 \\
$(V{=}512, d{=}512)$  & 0.019 & 0.035 & 0.801 & 0.069 & 2.959 \\
$(V{=}1024, d{=}256)$ & 0.018 & 0.033 & 0.804 & 0.065 & 2.985 \\
$(V{=}1024, d{=}512)$ & 0.017 & 0.033 & 0.808 & 0.059 & 3.050 \\
$(V{=}2048, d{=}128)$ & 0.016 & 0.032 & 0.798 & 0.066 & 3.028 \\
$(V{=}2048, d{=}512)$ & 0.016 & 0.032 & 0.790 & 0.071 & 3.073 \\
$(V{=}4096, d{=}64)$  & 0.015 & 0.032 & 0.789 & 0.073 & 3.090 \\
$(V{=}4096, d{=}512)$ & 0.015 & 0.030 & 0.797 & 0.078 & 3.023 \\
\bottomrule
\end{tabular}}
\end{subtable}
\hfill
\begin{subtable}[t]{0.48\textwidth}
\centering
\caption{\textbf{shared} codebook.}
\label{tab:vocab_ablation_2}
\setlength{\tabcolsep}{2pt}
\resizebox{\linewidth}{!}{%
\begin{tabular}{l cc ccc}
\toprule
\multirow{2}{*}{\textbf{Method}} &
\multicolumn{2}{c}{\textbf{Reconstruction}} &
\multicolumn{3}{c}{\textbf{Generation}} \\
\cmidrule(lr){2-3}\cmidrule(lr){4-6}
& \textbf{FID}$\downarrow$ & \textbf{MPJPE}$\downarrow$
& \makecell{\textbf{R-Precision}\\\textbf{(Top-3)}$\uparrow$}
& \textbf{FID}$\downarrow$
& \makecell{\textbf{Multimodal}\\\textbf{Dist}$\downarrow$} \\
\midrule
$(V{=}512, d{=}256)$  & 0.021 & 0.039 & 0.830 & 0.059 & 2.816 \\
$(V{=}512, d{=}512)$  & 0.020 & 0.037 & 0.833 & 0.049 & 2.811 \\
$(V{=}1024, d{=}256)$ & 0.018 & 0.035 & 0.826 & 0.056 & 2.864 \\
\rowcolor{gray!15}
$(V{=}1024, d{=}512)$ & 0.018 & 0.036 & 0.820 & 0.042 & 2.892 \\
$(V{=}2048, d{=}128)$ & 0.019 & 0.033 & 0.816 & 0.072 & 2.898 \\
$(V{=}2048, d{=}512)$ & 0.018 & 0.034 & 0.822 & 0.067 & 2.908 \\
$(V{=}4096, d{=}64)$  & 0.016 & 0.032 & 0.810 & 0.048 & 2.961 \\
$(V{=}4096, d{=}512)$ & 0.017 & 0.034 & 0.818 & 0.060 & 2.905 \\
\bottomrule
\end{tabular}}
\end{subtable}
\end{table*}

\noindent We study the effect of codebook design on both reconstruction and generation quality by varying the vocabulary size $V$ and embedding dimension $d$, and comparing \textbf{non-shared} versus \textbf{shared} codebooks across scales.
Table~\ref{tab:vocab_ablation_both} reports reconstruction metrics (FID, MPJPE) and text-to-motion metrics (R-Precision, FID, Multimodal Distance) for all configurations. For the non-shared setting, each scale maintains its own codebook. As $V$ (and $d$) increase, reconstruction error steadily decreases, reflecting the higher quantization capacity. However, the generation metrics do not consistently improve: R-Precision remains around $0.79$ to $0.81$, and generation FID saturates or even degrades for large vocabularies.
We attribute this to the increased difficulty of training and sampling from multiple large categorical spaces, where each code sees fewer examples while the autoregressive model must go through the entire token sets to make prediction.

\noindent In contrast, sharing a single codebook across scales yields comparable reconstruction quality but noticeably better generation performance for the same $(V,d)$. Across the setups, the shared variants achieve higher R-Precision and lower generation FID than their non-shared counterparts. Within the shared codebook, moderate vocabulary sizes ($V=512$ or $1024$) already provide sufficient capacity, while excessively large codebooks ($V=2048$ or $4096$) further reduce reconstruction error but tend to worsen generation FID and multimodal distance, suggesting over-parameterization of the discrete space. We therefore adopt the shared configuration with $(V{=}1024, d{=}512)$ (highlighted in Table~\ref{tab:vocab_ablation_both}) as our default, which strikes a good balance between reconstruction quality and robust text-to-motion generation, while keeping the discrete latent space compact enough for stable multi-scale autoregressive training.

\subsection{Ablation on number of scales}
\label{sec:num_scales}

\begin{table}[h]
\centering
\caption{Corresponding base scale schedule for each number of scales $K$.}
\label{tab:num_scales_schedule}
\setlength{\tabcolsep}{6pt}
\resizebox{0.5\linewidth}{!}{%
\begin{tabular}{c l}
\toprule
$K$ & Base scale schedule \\
\midrule
3  & $(1,8,16)$ \\
4  & $(1,5,10,16)$ \\
5  & $(1,4,8,12,16)$ \\
6  & $(1,4,7,10,13,16)$ \\
7  & $(1,3,6,8,11,13,16)$ \\
8  & $(1,3,5,7,9,11,13,16)$ \\
9  & $(1,2,4,6,8,10,12,14,16)$ \\
10 & $(1,2,3,4,5,6,8,10,13,16)$ \\
12 & $(1,2,3,5,6,7,9,10,11,13,14,16)$ \\
14 & $(1,2,3,4,5,6,7,9,10,11,12,13,14,16)$ \\
16 & $(1,2,3,4,5,6,7,8,9,10,11,12,13,14,15,16)$ \\
\bottomrule
\end{tabular}}
\end{table}

\noindent We next investigate the effect of the number of motion scales $K$ in our multi-scale VQ-VAE and multi-scale transformer.
For all configurations, we fix the finest base level at 16 and choose $K$ temporal scales between 1 and 16 as listed in Table~\ref{tab:num_scales_schedule}, roughly densifying intermediate resolutions as $K$ increases.

\noindent Table~\ref{tab:num_scales_ablation} reports reconstruction and text-to-motion generation metrics for different $K$.
Using too few scales (e.g., $K{=}3$ or $4$) leads to large quantization and reconstruction errors (high reconstruction FID/MAE), which in turn yield poor generation FID.
Increasing the depth of the hierarchy from $K{=}3$ to $K{=}7$ significantly improves both reconstruction and generation quality, with generation FID dropping from $0.347$ to $0.050$.
Beyond $K{=}7$ to $10$, however, the gains start to saturate. While reconstruction continues to improve slightly for larger $K$, the text-to-motion metrics (R-Precision, FID, Multimodal Distance) fluctuate within a small range and do not benefit from very deep hierarchies ($K{\geq}12$). Additionally, Fig.~\ref{fig:codebook_usage_per_scale} shows that the codebook usage of our model remains high across scales, indicating that the learned multi-scale representation is stable and does not collapse to only a few active levels. This suggests that both coarse and fine scales contribute meaningfully to the hierarchy. The relatively lower usage at the coarsest scales is also reasonable, since they compactly encode the global motion structure into fewer tokens.

\begin{table*}[t]
\centering
\caption{Ablation study on the number of scales $K$ for reconstruction and generation.}
\label{tab:num_scales_ablation}
\setlength{\tabcolsep}{6pt}
\resizebox{0.75\textwidth}{!}{%
\begin{tabular}{l cc ccc}
\toprule
\multirow{2}{*}{\textbf{Method}} &
\multicolumn{2}{c}{\textbf{Reconstruction}} &
\multicolumn{3}{c}{\textbf{Generation}} \\
\cmidrule(lr){2-3}\cmidrule(lr){4-6}
& \textbf{FID}$\downarrow$ & \textbf{MPJPE}$\downarrow$
& \makecell{\textbf{R-Precision}\\\textbf{(Top-3)}$\uparrow$}
& \textbf{FID}$\downarrow$
& \makecell{\textbf{Multimodal}\\\textbf{Dist}$\downarrow$} \\
\midrule
$(K{=}3)$  & 0.074 & 0.063 & 0.808 & 0.347 & 3.023 \\
$(K{=}4)$  & 0.049 & 0.052 & 0.827 & 0.151 & 2.883 \\
$(K{=}5)$  & 0.032 & 0.046 & 0.835 & 0.077 & 2.824 \\
$(K{=}6)$  & 0.022 & 0.040 & 0.817 & 0.067 & 2.908 \\
$(K{=}7)$  & 0.018 & 0.038 & 0.835 & 0.050 & 2.829 \\
$(K{=}8)$  & 0.018 & 0.036 & 0.826 & 0.048 & 2.860 \\
$(K{=}9)$  & 0.016 & 0.034 & 0.824 & 0.052 & 2.907 \\
\rowcolor{gray!15}
$(K{=}10)$ & 0.018 & 0.036 & 0.820 & 0.042 & 2.892 \\
$(K{=}12)$ & 0.014 & 0.031 & 0.818 & 0.046 & 2.894 \\
$(K{=}14)$ & 0.012 & 0.030 & 0.813 & 0.044 & 2.898 \\
$(K{=}16)$ & 0.011 & 0.029 & 0.816 & 0.044 & 2.870 \\
\bottomrule
\end{tabular}%
}
\end{table*}

\begin{figure*}[t]
    \centering
    \includegraphics[width=0.69\linewidth]{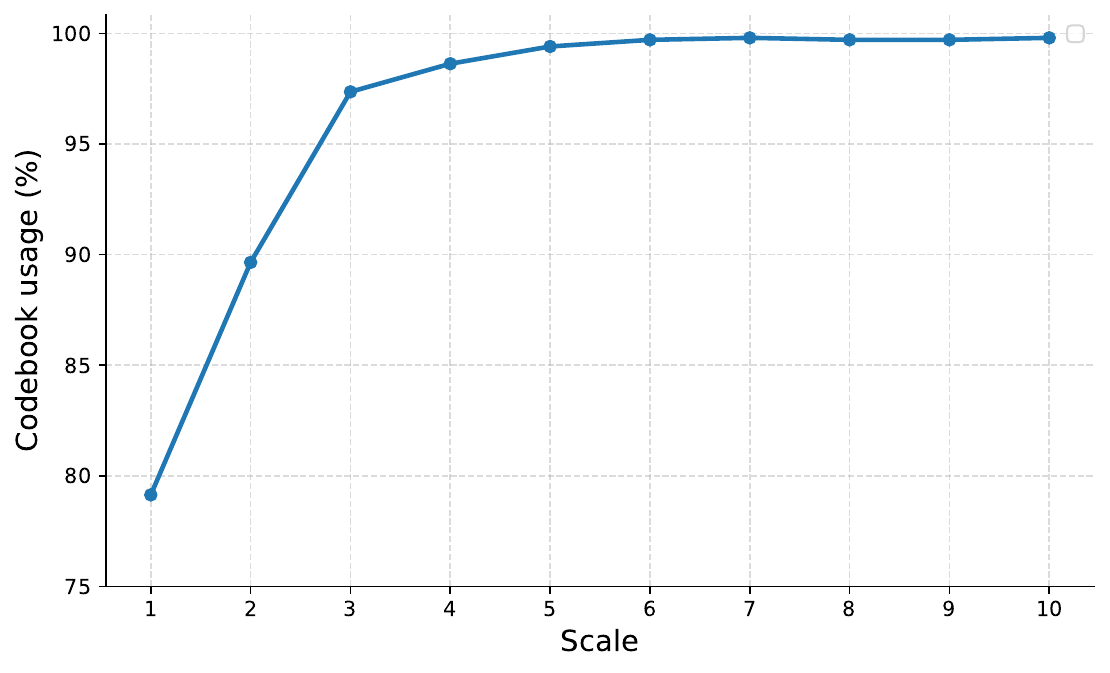}
    \vspace{-2mm}
    \caption{Codebook usage per scale.}
    \label{fig:codebook_usage_per_scale}
\end{figure*}

\begin{table*}[h]
\centering
\caption{Joint-controlled ablations on the number of motion scales $K$.}
\label{tab:scale_ablation_pelvis}
\setlength{\tabcolsep}{6pt}
\resizebox{0.9\textwidth}{!}{%
\begin{tabular}{l c c c c c c}
\toprule
\multirow{2}{*}{\textbf{Method}}
& \multirow{2}{*}{\makecell{\textbf{R-Precision}\\ \textbf{(Top-3)}$\uparrow$}}
& \multirow{2}{*}{\textbf{FID}$\downarrow$}
& \multirow{2}{*}{\makecell{\textbf{Trajectory Error}\\ $(>50\,\mathrm{cm})$ (\%)$\downarrow$}}
& \multirow{2}{*}{\makecell{\textbf{Location Error}\\ $(>50\,\mathrm{cm})$ (\%)$\downarrow$}}
& \multirow{2}{*}{\makecell{\textbf{Average}\\ \textbf{Error} (cm)$\downarrow$}}
& \multirow{2}{*}{\makecell{\textbf{Run}\\\textbf{time} (s)$\downarrow$}}\\
~ & ~ & ~ & ~ & ~ & ~ & ~ \\
\midrule
$(K{=}3)$  & 0.761 & 0.510 & 1.00 & 0.12 & 1.94 & 2.32 \\
$(K{=}4)$  & 0.790 & 0.330 & 0.34 & 0.04 & 1.26 & 2.58 \\
$(K{=}5)$  & 0.796 & 0.248 & 0.28 & 0.04 & 1.19 & 2.75 \\
$(K{=}6)$  & 0.783 & 0.187 & 0.22 & 0.00 & 1.02 & 2.84 \\
$(K{=}7)$  & 0.798 & 0.187 & 0.12 & 0.00 & 0.83 & 3.06 \\
$(K{=}8)$  & 0.795 & 0.141 & 0.18 & 0.00 & 0.88 & 3.21 \\
$(K{=}9)$  & 0.813 & 0.133 & 0.12 & 0.00 & 0.88 & 3.45 \\
\rowcolor{gray!15}
$(K{=}10)$ & 0.812 & 0.124 & 0.11 & 0.00 & 0.82 & 3.61 \\
$(K{=}12)$ & 0.806 & 0.138 & 0.10 & 0.00 & 0.72 & 4.09 \\
$(K{=}14)$ & 0.800 & 0.140 & 0.06 & 0.00 & 0.66 & 4.52 \\
$(K{=}16)$ & 0.795 & 0.147 & 0.01 & 0.00 & 0.59 & 6.15 \\
\bottomrule
\end{tabular}%
}
\end{table*}

For joint-controlled generation, Table~\ref{tab:scale_ablation_pelvis} shows the trade-off between control accuracy and runtime as $K$ varies.
More scales allow the guidance to adjust the motion in a finer coarse-to-fine manner, reducing trajectory error and average control error.
For example, the average error decreases from $1.94$\,cm at $K{=}3$ to $0.83$\,cm at $K{=}7$, and further to $0.59$\,cm at $K{=}16$, while the ratio of the trajectories with error ($>50$ cm) also steadily decreases.
At the same time, runtime grows approximately monotonically with $K$, from $2.32$\,s at $K{=}3$ to $6.15$\,s at $K{=}16$.
We therefore adopt $K{=}10$ (highlighted in both tables) as our default configuration, which offers a good balance between text-to-motion quality, accurate control (average error ${\sim}0.8$\,cm), and moderate runtime (3.61\,s), avoiding the extra cost of very deep multi-scale hierarchies with diminishing gains.

\subsection{Analysis of first-order guidance}
\label{sec:first_order_guidance}
\begin{figure}[h]
    \centering
    \begin{subfigure}[t]{0.24\linewidth}
        \centering
        \includegraphics[width=\linewidth]{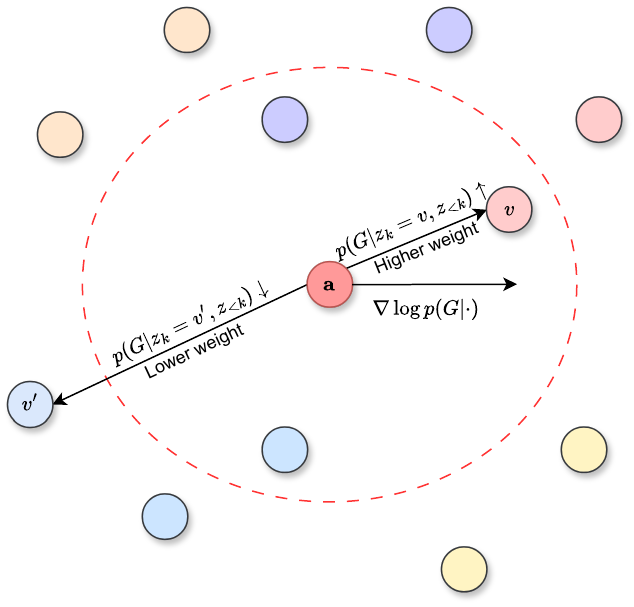}
        \caption{Geometric interpretation of our first-order token guidance strategy.}
        \label{fig:guidance_intuition}
    \end{subfigure}
    \hfill
    \begin{subfigure}[t]{0.49\linewidth}
        \centering
        \includegraphics[width=\linewidth]{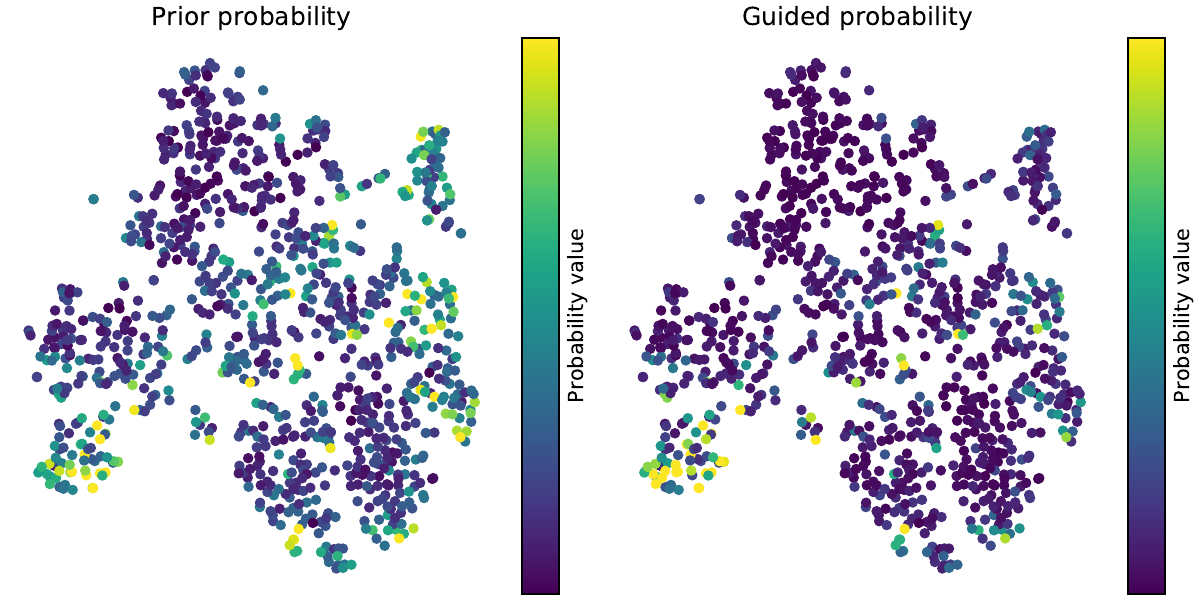}
        \caption{t-SNE embedding of the codebook before and after guidance.}
        \label{fig:guidance_tsne}
    \end{subfigure}
    \hfill
    \begin{subfigure}[t]{0.24\linewidth}
        \centering
        \includegraphics[width=\linewidth]{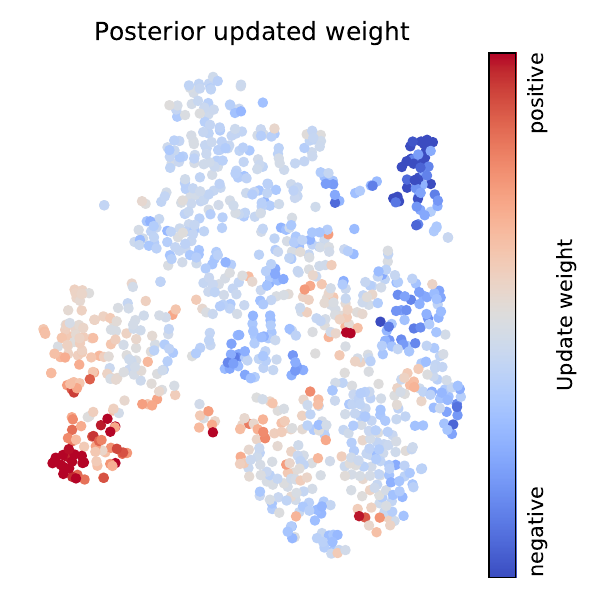}
        \caption{Posterior update weight of each codebook token calculated by our guidance strategy.}
        \label{fig:guidance_weight}
    \end{subfigure}
    \vspace{-2mm}
    \caption{Visualization of our token guidance behavior.}
    \label{fig:guidance_all}
\end{figure}
\vspace{-2mm}

\noindent Here we provide an intuitive visualization of our first-order token guidance strategy in Fig.~\ref{fig:guidance_all}.
Technically speaking, at each position and scale, we expand the goal likelihood around the prior expected embedding of the token and use its gradient to identify the local ascent direction in code embedding space.
Each codebook entry can then be seen as an offset from this expansion point so that: if the offset points roughly along the ascent direction, selecting that code vector is expected to increase the goal likelihood; if it points in the opposite direction, it will decrease the likelihood locally. Fig~\ref{fig:guidance_intuition} illustrates this geometry intuition: the expansion embedding lies at the center with surrounded codebook entries, and only those whose offsets align with the ascent direction are strongly upweighted, while others are downweighted or barely changed.
In Fig.~\ref{fig:guidance_tsne}, we project the entire codebook to 2D space using t-SNE~\cite{tsne} and colorize each token by its sampling probability before and after guidance.
As can be seen from the figure, the global structure of the embedding space is preserved, but probability mass becomes concentrated in a smaller subset of clusters (at the bottom left) that better align with the control goal, while most other regions are attenuated.
Finally, Fig.~\ref{fig:guidance_weight} visualizes the posterior update weight applied to each token under our guidance. The corresponding compact group of entries receives large positive updates, the opposite region receives strong negative updates, and the majority of codes are mildly affected, indicating that our guidance strategy performs a local, directional reweighting of the existing code distribution rather than collapsing the embedding space.

\begin{table*}[t]
\centering
\small
\caption{Average code distance and KL divergence between the exact and approximate posterior for different codebook parameterizations. Code distance is the maximum distance between a codebook embedding and the expansion point across all token positions.}
\setlength{\tabcolsep}{4pt}
\begin{tabular}{lcc}
\toprule
\textbf{Codebook type} & \textbf{Code distance} & \textbf{Average KL} \\
\midrule
Euclidean code        & 5.545 & 0.781 \\
$\ell_2$-normalized code & 1.663 & 0.192 \\
\bottomrule
\end{tabular}
\label{tab:kl_codebook}
\end{table*}

\begin{table*}[t]
\centering
\small
\caption{Average KL divergence between the exact and approximate posterior at each scale of MSCoT. The approximation becomes increasingly accurate at finer scales.}
\setlength{\tabcolsep}{4pt}
\begin{tabular}{ccccccccccc}
\toprule
\textbf{Scale}    & 1     & 2     & 3     & 4     & 5     & 6     & 7     & 8     & 9     & 10    \\
\midrule
\textbf{Average KL}  & 0.299 & 0.399 & 0.379 & 0.331 & 0.236 & 0.136 & 0.078 & 0.035 & 0.023 & 0.011 \\
\bottomrule
\end{tabular}
\label{tab:kl_scales_horizontal}
\end{table*}

To empirically investigate the effect our first-order posterior approximation, we compare it against the exact token posterior and report the average KL divergence over each token position. The KL value is calculated on a subset of 1000 randomly selected samples due to high computation cost. As shown in Table~\ref{tab:kl_codebook}, using the standard Euclidean codebook yields a mean KL of 0.781, whereas the $\ell_2$-normalized codebook reduces this to 0.192, together with a substantial reduction in the  code distance to the expansion point. This is consistent with our theoretical bound, which predicts tighter approximations when codebook entries lie closer to the expansion point in a normalized embedding space.
Moreover, Table~\ref{tab:kl_scales_horizontal} shows that, the average KL divergence progressively decreases from coarse to fine scales (from $\approx 0.40$ at early scales to $\approx 0.01$ at the finest scale), indicating that the approximation becomes increasingly accurate as the model refines more local details. Overall, these results suggest that our approximate posterior remains close to the exact posterior in practice, particularly at the finer scales where precise control is most critical.

\subsection{Analysis of token refinement}
\label{sec:token_refine}

\begin{table}[h]
\caption{Ablation study on token refiner for joint control. Runtime is reported with only the refiner.}
\vspace{-1mm}
\centering
\setlength{\tabcolsep}{4pt}
\resizebox{0.95\linewidth}{!}{%
\begin{tabular}{l c c c c c c}
\toprule
\multirow{2}{*}{\textbf{Refiner}}
& \multirow{2}{*}{\makecell{\textbf{R-Precision}\\ \textbf{(Top-3)}$\uparrow$}}
& \multirow{2}{*}{\textbf{FID}$\downarrow$}
& \multirow{2}{*}{\makecell{\textbf{Location Error}\\ $(>50\,\mathrm{cm})$ (\%)$\downarrow$}}
& \multirow{2}{*}{\makecell{\textbf{Average}\\ \textbf{Error} (cm)$\downarrow$}}
& \multirow{2}{*}{\makecell{\textbf{Run}\\\textbf{time} (s)$\downarrow$}}
& \multirow{2}{*}{\makecell{\textbf{\#Params}\\\textbf{(M)}}} \\
~ & ~ & ~ & ~ & ~ & ~ & ~ \\
\midrule
4-layer MLP          & 0.798     & 0.175     & 0.32   & 1.45    & 0.01 & 1.05 \\
8-layer MLP          & 0.792     & 0.167     & 0.25   & 1.11    & 0.03 & 2.10 \\
\rowcolor{gray!15}
2-layer Transformer  & 0.812  & 0.124  & 0.00 & 0.82  & 0.05 & 2.72 \\
3-layer Transformer  & 0.811  & 0.128  & 0.00 & 0.76  & 0.10 & 6.19 \\
\bottomrule
\end{tabular}%
}
\label{tab:refiner_ablation}
\end{table}

\noindent Table~\ref{tab:refiner_ablation} analyzes the effect of the token refiner model design on joint control. The MLP variants treat tokens independently and therefore struggle to coordinate corrections along the sequence. This leads to higher FID and larger average errors, as well as non-zero location errors above 50\,cm, even when the network depth is increased. In contrast, a lightweight transformer-based refiner explicitly models token dependencies through self-attention and can produce local corrections in a coherent way across time. As a result, both the 2-layer and 3-layer transformer refiners achieve substantially lower FID and control error. The 3-layer variant provides only marginal gains in average error while slightly degrading FID and incurring a noticeable increase in runtime and parameters. We therefore adopt the 2-layer transformer as the refiner due to a good balance between control accuracy, motion realism, and computational cost.

\begin{figure*}[h]
\centering
\begin{subfigure}{0.324\textwidth}
\centering
\begin{tikzpicture}
\begin{axis}[
    width=1.2\linewidth,
    height=1.1\linewidth,
    xlabel={Refinement iterations},
    ylabel={FID},
    xlabel style={font=\scriptsize},
    ylabel style={font=\scriptsize, yshift=-4pt},
    tick label style={font=\scriptsize},
    ymin=0.10, ymax=0.16,
    grid=both,
    legend style={at={(0.97,0.03)}, anchor=south east, font=\scriptsize},
]
\addplot+[mark=*, color=green] coordinates {
    (10,0.107)
    (30,0.109)
    (60,0.113)
    (100,0.117)
    (130,0.119)
    (160,0.121)
    (200,0.125)
    (230,0.129)
    (260,0.134)
    (300,0.135)
    (330,0.141)
    (360,0.147)
    (400,0.153)
};
\legend{FID}
\end{axis}
\end{tikzpicture}
\caption{FID vs iterations}
\end{subfigure}
\hspace{0.5pt}
\hfill
\begin{subfigure}{0.324\textwidth}
\centering
\begin{tikzpicture}
\begin{axis}[
    width=1.2\linewidth,
    height=1.1\linewidth,
    xlabel={Refinement iterations},
    ylabel={Runtime (s)},
    xlabel style={font=\scriptsize},
    ylabel style={font=\scriptsize, yshift=-4pt},
    tick label style={font=\scriptsize},
    ymin=1.5, ymax=5.6,
    grid=both,
    legend style={at={(0.97,0.03)}, anchor=south east, font=\scriptsize},
]
\addplot+[mark=*, color=blue] coordinates {
    (10,1.823)
    (30,2.061)
    (60,2.342)
    (100,2.696)
    (130,2.984)
    (160,3.252)
    (200,3.610)
    (230,3.890)
    (260,4.160)
    (300,4.523)
    (330,4.809)
    (360,5.077)
    (400,5.427)
};
\legend{Runtime}
\end{axis}
\end{tikzpicture}
\caption{Runtime vs iterations}
\end{subfigure}
\hfill
\begin{subfigure}{0.324\textwidth}
\centering
\begin{tikzpicture}
\begin{axis}[
    width=1.2\linewidth,
    height=1.1\linewidth,
    xlabel={Refinement iterations},
    ylabel={Average Error (cm)},
    xlabel style={font=\scriptsize},
    ylabel style={font=\scriptsize, yshift=-4pt},
    tick label style={font=\scriptsize},
    ymin=0.3, ymax=3.2,
    grid=both,
    legend style={at={(0.97,0.97)}, anchor=north east, font=\scriptsize},
]
\addplot+[mark=*, color=orange] coordinates {
    (10,3.10)
    (30,2.57)
    (60,1.74)
    (100,1.33)
    (130,1.12)
    (160,0.95)
    (200,0.82)
    (230,0.78)
    (260,0.68)
    (300,0.52)
    (330,0.47)
    (360,0.42)
    (400,0.38)
};
\legend{Avg Error}
\end{axis}
\end{tikzpicture}
\caption{Average error vs iterations}
\end{subfigure}
\caption{Trade-off between refinement iterations, control quality, and runtime.}
\label{fig:refinement_tradeoff}
\end{figure*}

Fig.~\ref{fig:refinement_tradeoff} further illustrates the trade-off between refinement iterations, control quality, and runtime. As the number of iterations increases, the average control error monotonically decreases, while both FID and inference time increase. This shows the behavior of our test-time refinement objective, which only constrains a subset of joints and frames and does not include a strong motion prior term. With few iterations, the refined tokens stay close to the generative prior, which preserves distributional realism and yields very low FID, but some control error still remains. With more iterations, the refinement pushes the constrained joints to better match the targets and correspondingly adjust other unconstrained parts of the body to compensate. This improves control accuracy but gradually shifts the motion away from the training data distribution, which is captured by the higher FID and longer runtime. In practice, we choose a moderate number of refinement steps ($I{=}200$) that achieves low average error while keeping FID and inference time within a reasonable range.

\subsection{Detailed control results for each joint}
\label{sec:joint_control_details}
\begin{table*}[t]
\caption{Joint-controlled motion generation evaluation on HumanML3D~\cite{guo2022humanml3d}. Best in \textbf{bold}.}
\vspace{-1mm}
\centering
\resizebox{\linewidth}{!}{%
\begin{tabular}{l l c c c c c c c}
\toprule
\multirow{2}{*}{\textbf{Joint}} &
\multirow{2}{*}{\textbf{Method}} &
\multirow{2}{*}{\makecell{\textbf{R-Precision}\\\textbf{(Top-3)}$\uparrow$}} &
\multirow{2}{*}{\textbf{FID}$\downarrow$} &
\multirow{2}{*}{\makecell{\textbf{Skating}\\\textbf{Ratio}$\downarrow$}} &
\multirow{2}{*}{\makecell{\textbf{Trajectory Error}\\$(>50\,\mathrm{cm})$ (\%)$\downarrow$}} &
\multirow{2}{*}{\makecell{\textbf{Location Error}\\$(>50\,\mathrm{cm})$ (\%)$\downarrow$}} &
\multirow{2}{*}{\makecell{\textbf{Average}\\\textbf{Error} (cm)$\downarrow$}} &
\multirow{2}{*}{\makecell{\textbf{Run}\\\textbf{time} (s)$\downarrow$}} \\
~ & ~ & ~ & ~ & ~ & ~ & ~ & ~ & ~ \\
\midrule
\multirow{2}{*}{\textbf{Pelvis}}
& MaskControl~\cite{pinyoanuntapong2025maskcontrol}
& 0.803 & 0.236 & 0.0679 & 1.16 & 0.21 & 2.08 & 39.02 \\
& MSCoT (ours)
& \textbf{0.812} & \textbf{0.124} & \textbf{0.0662} & \textbf{0.11} & \textbf{0.00} & \textbf{0.82} & \textbf{3.61} \\
\midrule
\multirow{2}{*}{\textbf{L-Foot}}
& MaskControl~\cite{pinyoanuntapong2025maskcontrol}
& 0.809 & 0.340 & 0.0689 & 1.49 & 0.16 & 1.45 & 39.06 \\
& MSCoT (ours)
& \textbf{0.811} & \textbf{0.226}  & \textbf{0.0675} & \textbf{0.34} & \textbf{0.01} & \textbf{0.73} & \textbf{3.62} \\
\midrule
\multirow{2}{*}{\textbf{R-Foot}}
& MaskControl~\cite{pinyoanuntapong2025maskcontrol}
& 0.807 & 0.304 & \textbf{0.0682} & 1.29 & 0.14 & 1.31 & 39.05 \\
& MSCoT (ours)
& \textbf{0.811} & \textbf{0.191} & 0.0685 & \textbf{0.35} & \textbf{0.02} & \textbf{0.69} & \textbf{3.61} \\
\midrule
\multirow{2}{*}{\textbf{Head}}
& MaskControl~\cite{pinyoanuntapong2025maskcontrol}
& 0.808 & 0.283 & \textbf{0.0634} & 0.68 & 0.14 & 1.38 & 39.02 \\
& MSCoT (ours)
& \textbf{0.812} & \textbf{0.121} & 0.0646 & \textbf{0.07} & \textbf{0.00} & \textbf{0.49} & \textbf{3.59} \\
\midrule
\multirow{2}{*}{\textbf{L-Wrist}}
& MaskControl~\cite{pinyoanuntapong2025maskcontrol}
& \textbf{0.809} & 0.300 & 0.0663 & 0.65 & 0.13 & 1.04 & 39.03 \\
& MSCoT (ours)
& 0.808 & \textbf{0.213} & \textbf{0.0656} & \textbf{0.04} & \textbf{0.00} & \textbf{0.40} & \textbf{3.61} \\
\midrule
\multirow{2}{*}{\textbf{R-Wrist}}
& MaskControl~\cite{pinyoanuntapong2025maskcontrol}
& 0.808 & 0.328 & 0.0665 & 0.55 & 0.09 & 1.00 & 39.04 \\
& MSCoT (ours)
& \textbf{0.811} & \textbf{0.174} & \textbf{0.0644} & \textbf{0.05} & \textbf{0.00} & \textbf{0.39} & \textbf{3.60} \\
\bottomrule
\end{tabular}%
}
\label{tab:joint_control}
\vspace{-2mm}
\end{table*}

\noindent Table~\ref{tab:joint_control} reports the detailed joint-controlled results on HumanML3D under the same evaluation setting as Table~1 in the main paper. Following OmniControl~\cite{xie2024omnicontrol}, we consider six commonly used joints for everyday human interactions (pelvis, foot, head, and wrists), and list the full metrics for each. The overall trend is consistent with the main table, showing that our method yields better text alignment, motion quality, and control accuracy against the baseline across all joints.

\begin{table}[h]
\caption{Detailed results on different control densities for root trajectory.}
\vspace{-1mm}
\centering
\resizebox{0.8\linewidth}{!}{%
\begin{tabular}{l c c c c c}
\toprule
\textbf{Density}
& \makecell{\textbf{R-Precision}\\\textbf{(Top-3)}$\uparrow$}
& \textbf{FID}$\downarrow$
& \makecell{\textbf{Trajectory Error}\\$(>50\,\mathrm{cm})$ (\%)$\downarrow$}
& \makecell{\textbf{Location Error}\\$(>50\,\mathrm{cm})$ (\%)$\downarrow$}
& \makecell{\textbf{Average}\\\textbf{Error} (cm)$\downarrow$} \\
\midrule
1    & 0.816 & 0.059 & 0.00 & 0.00 & 0.02 \\
2    & 0.811 & 0.091 & 0.00 & 0.00 & 0.08 \\
5    & 0.817 & 0.149 & 0.00 & 0.00 & 0.39 \\
25\% & 0.813 & 0.164 & 0.28 & 0.00 & 1.72 \\
100\%& 0.804 & 0.163 & 0.30 & 0.00 & 1.91 \\
\bottomrule
\end{tabular}%
}
\label{tab:density_ablation}
\vspace{-2mm}
\end{table}

Table~\ref{tab:density_ablation} further examines the effect of control density for pelvis (root) trajectories. The values $1$, $2$, and $5$ refer to controlling exactly $1$, $2$, or $5$ keyframes, whereas $25\%$ and $100\%$ indicate the percentage of the total frames depending on the actual length of each sample. Since MSCoT is never trained with joint-control signals and all objectives are imposed only at test time via guidance and refinement, increasing the number of constrained frames naturally makes the optimization problem more challenging. Particularly, the model must satisfy many local targets while remaining realistic under the learned motion prior. It is also worth noting that we do not include any sophisticated motion prior regularization in the control objective (e.g., deeply learned motion prior~\cite{rempe2021humor,shi2023phasemp,zhang2024rohm}), as we leave integrating such priors into our guidance framework as future work. We observe that sparse to moderate control densities (1-5 frames) already achieve very accurate control (low trajectory errors) with good FID, whereas very dense constraints (controlling a large fraction or all frames) slightly increase both average error and FID. This is expected because FID measures distributional realism with respect to the training data, and aggressive test-time refinement can cause the samples to deviate from the original data distribution even when they closely follow the target trajectories. Overall, it reflects more difficult optimization landscape when the model has less temporal freedom to reconcile the control and motion prior.

\section{Details of application control objectives}
\label{sec:supp_applications}

In this section, we provide the concrete control objectives used in Sec.~4.2 in the main paper. Recall that, for our guidance formulation, the user-specified control goal is encoded by a scalar goal function $G(\hat{\bm{x}})$ that measures how well the generated motion $\hat{\bm{x}}$ matches the desired behavior. In practice, we implement $G$ via a task-specific control loss $\mathcal{L}_{\mathrm{ctrl}}(\hat{\bm{x}})$ and treat the
goal likelihood as
\begin{equation}
    \log p\big(G \,\vert\, \hat{\bm{x}}\big)
    \;\propto\;
    -\,\mathcal{L}_{\mathrm{ctrl}}(\hat{\bm{x}}),
\end{equation}
so that all applications only differ in the choice of $\mathcal{L}_{\mathrm{ctrl}}$.
This definition is fully compatible with the first-order token guidance in Algorithm~\ref{alg:guidance}, since we only require gradients $\nabla_{\hat{\bm{x}}}\log p(G\vert\hat{\bm{x}})$, which can be obtained by backpropagating
$\mathcal{L}_{\mathrm{ctrl}}$ through the decoder and the token embeddings.

\vspace{1mm}
\noindent\textbf{Any-joint-any-time control.}
\label{sec:any_joint_any_time}
For the any-joint-any-time setting, we directly use the joint-control objective from the main paper (Eq.~13).
Let $\hat{\bm{x}}_{t,j} \in \mathbb{R}^3$ denote the generated global position of joint $j$ at frame $t$, and
$\bm{x}_{t,j} \in \mathbb{R}^3$ the corresponding target position specified by the user.
We define a binary mask $b_{t,j} \in \{0,1\}$ that indicates which (time, joint) pairs are constrained
($b_{t,j}=1$) and which are left unconstrained ($b_{t,j}=0$).
The joint-control likelihood can then be rewritten as:
\begin{equation}
\label{eq:supp_joint_loss}
    \log p\big(G \,\vert\, \hat{\bm{x}}\big)
    \propto -\,\mathcal{L}_{\mathrm{joint}}(\hat{\bm{x}}),
    \qquad
    \mathcal{L}_{\mathrm{joint}}(\hat{\bm{x}})
    = \frac{1}{2\sigma}
    \sum_{t}\sum_{j}
    b_{t,j}\,\big\Vert \bm{x}_{t,j}-\hat{\bm{x}}_{t,j}\big\Vert_2^2,
\end{equation}
where $\sigma$ controls the strength of the constraint.
Intuitively, $\mathcal{L}_{\mathrm{joint}}$ encourages the generated joints to stay close to the user-specified targets,
while the mask $b_{t,j}$ allows the user to select arbitrary joints and frames to be controlled.
In practice, the target trajectories $\bm{x}_{t,j}$ can either be extracted from real motion capture sequences or
procedurally synthesized using simple analytic curves (e.g., straight lines, circular arcs, or sinusoidal waves), which
we use in our demonstration video.
In this any-joint-any-time application, the user chooses a subset of joints and keyframes (e.g., pelvis, hands, or feet at sparse time steps), and we set $b_{t,j}=1$ for those entries and $b_{t,j}=0$ otherwise. This directly implements the any-joint-any-time interface without changing the underlying model or training procedure.
During guided generation, we simply take $\mathcal{L}_{\mathrm{ctrl}}(\hat{\bm{x}}) = \mathcal{L}_{\mathrm{joint}}(\hat{\bm{x}})$
in Eq.~\ref{eq:supp_joint_loss}.
The gradient $\nabla_{\hat{\bm{x}}}\mathcal{L}_{\mathrm{joint}}$ is then backpropagated through the frozen VQ-VAE decoder
to obtain $\nabla_{\bm{e}}\log p(G\vert\hat{\bm{x}})$ on the token embeddings, which is used by our token guidance
mechanism in Algorithm~\ref{alg:guidance}.

\vspace{1mm}
\noindent\textbf{Obstacle avoidance.}
\label{sec:obstacle_avoid}
In the obstacle avoidance application, the user wants the character to reach a target location while naturally avoiding a static
obstacle in the environment.
We instantiate this setting with obstacles modelled as 3D spheres, which allows for a simple and intuitive distance-based repulsive term. Concretely, we represent the obstacle as a sphere with centre $\bm{c} \in \mathbb{R}^3$ and radius $R > 0$.
For a joint position $\hat{\bm{x}}_{t,j}$ in global coordinates, the signed distance to the obstacle surface is
\begin{equation}
    d\!\left(\hat{\bm{x}}_{t,j}\right)
    = \big\Vert \hat{\bm{x}}_{t,j} - \bm{c} \big\Vert_2 - R,
\end{equation}
where $d > 0$ indicates the joint is outside the obstacle, $d = 0$ lies exactly on the surface, and $d < 0$ corresponds to penetration.
We then define a soft repulsive loss that activates when a joint is too close to or inside the obstacle.
Given a safety margin $\delta > 0$, we apply the repulsive penalty to all body joints and penalize any point whose distance
to the surface is smaller than $\delta$:
\begin{equation}
\label{eq:supp_obs_loss}
    \mathcal{L}_{\mathrm{obs}}(\hat{\bm{x}})
    =
    \frac{1}{T\,J}
    \sum_{t}\;\sum_{j}
    \Big[\max\big(0,\;\delta - d(\hat{\bm{x}}_{t,j})\big)\Big]^2,
\end{equation}
where $J$ is the number of joints.
As long as a joint stays outside the obstacle with a safety margin ($d(\hat{\bm{x}}_{t,j}) \ge \delta$), the
repulsive term is zero.
If a joint gets too close to the surface or penetrates the obstacle ($d(\hat{\bm{x}}_{t,j}) < \delta$), the penalty
grows quadratically with the depth of violation, encouraging the motion to move away from the obstacle in the direction
of the distance gradient.
The overall control objective for obstacle avoidance combines the target-reaching term and the repulsive term:
\begin{equation}
\label{eq:supp_ctrl_obs}
    \mathcal{L}_{\mathrm{ctrl}}(\hat{\bm{x}})
    =
    \mathcal{L}_{\mathrm{joint}}(\hat{\bm{x}})
    + \lambda_{\mathrm{obs}}\,
      \mathcal{L}_{\mathrm{obs}}(\hat{\bm{x}}),
\end{equation}
where $\lambda_{\mathrm{obs}}$ controls the trade-off between accurately following the target joint positions and
staying away from the obstacle.
In our obstacle avoidance examples, we use the same any-joint-any-time objective
$\mathcal{L}_{\mathrm{joint}}$ to pull the pelvis towards a target location, while the sphere-based repulsive term in
Eq.~\ref{eq:supp_obs_loss} prevents the character from colliding with the obstacle.

\vspace{1mm}
\noindent\textbf{Human-scene interaction.}
\label{sec:human_scene}
In human-scene interaction, the goal is to control the character to accomplish a task (e.g., walking across the room) while respecting the surrounding 3D environment. We implement this by combining the joint-control objective in Eq.~\ref{eq:supp_joint_loss} with a scene-based collision and contact term computed on a signed distance field (SDF). Given a static scene mesh, we first use~\cite{wang2022dual} to precompute a signed distance volume on a $256\times256\times256$ grid. The resulting SDF $\phi_{\mathrm{scene}}(\bm{p})$ is used to calculate the signed distance from any query
point $\bm{p} \in \mathbb{R}^3$ to the closest surface of the scene, with $\phi_{\mathrm{scene}}(\bm{p}) > 0$ outside
the scene, $\phi_{\mathrm{scene}}(\bm{p}) = 0$ on the surface, and $\phi_{\mathrm{scene}}(\bm{p}) < 0$ inside the scene.
Because HumanML3D pose representation only provides joint positions without full body mesh parameters, we cannot directly render or perform mesh-to-mesh collision checks.
Instead, we approximate the human body by a proxy representation consisting of several primitive shapes: spheres attached to joints and capsules along the limbs. At each frame $t$, this proxy defines a set of $M$ sample points $\{\bm{p}_m(t)\}_{m=1}^M$ in the scene coordinate system, each associated with a radius $r_m$.
We then evaluate the SDF at these proxy points and subtract the radius to obtain a signed distance value
We query the precomputed SDF at these proxy points and subtract the corresponding radius to obtain a signed distance value
\begin{equation}
    d_m(t) = \phi_{\mathrm{scene}}\big(\bm{p}_m(t)\big) - r_m.
\end{equation}
If $d_m(t) > 0$, the corresponding proxy shape lies outside the scene surface with a margin of $d_m(t)$; if $d_m(t) < 0$, it penetrates the scene by $-d_m(t)$.
We define the collision loss as the average penetration depth over all proxy points and frames:
\begin{equation}
\label{eq:supp_coll_loss}
    \mathcal{L}_{\mathrm{coll}}(\hat{\bm{x}})
    =
    \frac{1}{T M}
    \sum_{t}\sum_{m}
    \big[ \max\big(0,\,-d_m(t)\big) \big],
\end{equation}
which is zero when the proxy body stays outside the scene everywhere, and increases linearly with the penetration depth whenever any part of the proxy intersects the scene.
Because all proxy points are queried in a single batched interpolation over the SDF grid, this term provides a dense, differentiable collision signal at negligible runtime cost. To reduce floating artifacts, we also encourage the feet to stay close to the scene floor.
Let $\mathcal{F}$ denote the set of foot joint indices (left and right foot), and let $\hat{\bm{x}}_{t,j}$ be the global position of joint $j$ at frame $t$ in scene coordinates.
We compute the minimum SDF value over both feet,
\begin{equation}
    \phi_{\mathrm{foot}}(t)
    =
    \min_{j \in \mathcal{F}}
    \phi_{\mathrm{scene}}\big(\hat{\bm{x}}_{t,j}\big),
\end{equation}
and penalize frames where both feet are too far above the floor.
Given a small contact threshold $\tau_{\mathrm{cont}}$, the scene-contact loss is
\begin{equation}
\label{eq:supp_contact_loss}
    \mathcal{L}_{\mathrm{cont}}(\hat{\bm{x}})
    =
    \frac{1}{T}
    \sum_{t}
    \max\big(0,\; \phi_{\mathrm{foot}}(t) - \tau_{\mathrm{cont}}\big).
\end{equation}
When at least one foot is on or near the floor ($\phi_{\mathrm{foot}}(t) \le \tau_{\mathrm{cont}}$), the loss is zero; if both feet float above the surface, the loss grows with their height above the scene.
For scene-aware task, we also reuse the joint-control objective in Eq.~\ref{eq:supp_joint_loss} to pull task-relevant joints (e.g., pelvis or hand) towards scene-dependent targets, and combine it with the scene terms defined above.
The overall control loss is
\begin{equation}
\label{eq:supp_hsi_ctrl}
    \mathcal{L}_{\mathrm{ctrl}}(\hat{\bm{x}})
    =
    \mathcal{L}_{\mathrm{joint}}(\hat{\bm{x}})
    + \lambda_{\mathrm{coll}}\,\mathcal{L}_{\mathrm{coll}}(\hat{\bm{x}})
    + \lambda_{\mathrm{cont}}\,\mathcal{L}_{\mathrm{cont}}(\hat{\bm{x}}),
\end{equation}
where $\lambda_{\mathrm{coll}}$ and $\lambda_{\mathrm{cont}}$ control the trade-off between accurately following the joint target and respecting scene constraints.
As in the other applications, this loss is plugged into our guidance framework via
$\log p(G\vert\hat{\bm{x}}) \propto -\mathcal{L}_{\mathrm{ctrl}}(\hat{\bm{x}})$, so that gradients of the human-scene objective guide the token posterior at every scale.

\begin{figure}[h]
    \centering
    \begin{subfigure}[b]{0.48\linewidth}
        \centering
        \includegraphics[width=0.48\linewidth]{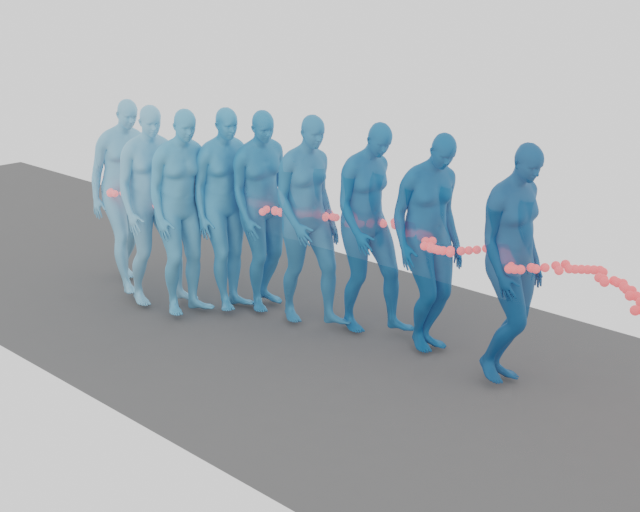}%
        \hfill
        \includegraphics[width=0.48\linewidth]{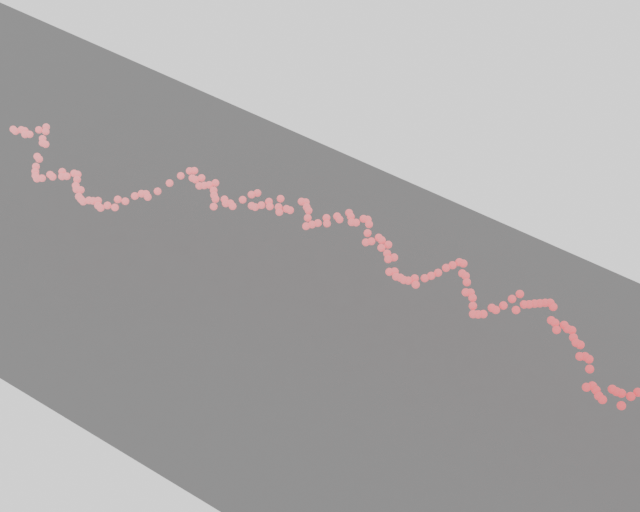}
        \caption{}
        \label{fig:failure_case1}
    \end{subfigure}
    \hfill
    \begin{subfigure}[b]{0.48\linewidth}
        \centering
        \includegraphics[width=0.48\linewidth]{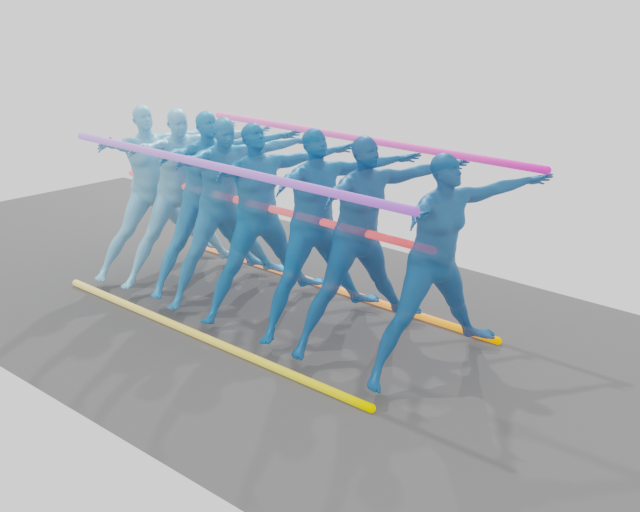}%
        \hfill
        \includegraphics[width=0.48\linewidth]{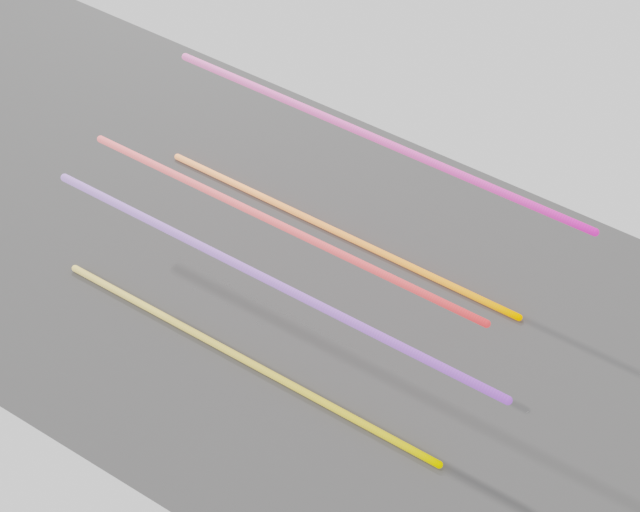}
        \caption{}
        \label{fig:failure_case2}
    \end{subfigure}
    \vspace{-2ex}
    \caption{Failure cases where noisy pelvis trajectories (a) and implausible multi-joint targets (b) lead to unnatural motions under the text prompt "a person walks forward".}
    \label{fig:failure_cases}
\end{figure}
\vspace{-4mm}

\vspace{1mm}
\noindent\textbf{Failure case.}
\label{sec:failure_case}
We show example failure cases in Fig.~\ref{fig:failure_cases}. As discussed in the limitations, our token guidance operates in a discrete codebook space, which imposes representation limits and necessitates test-time refinement. Unlike other methods~\cite{xie2024omnicontrol,dai2024motionlcm}, our approach relies on a refinement objective that \textit{only uses} the user-specified control signal at test time. As a result, the generation process becomes vulnerable when the control inputs are noisy or physically implausible, since the guidance and refinement will still try to match them as closely as possible. Fig.~\ref{fig:failure_cases}a shows a case where a noisy pelvis trajectory causes the character to float in the air while trying to follow the target path. Fig.~\ref{fig:failure_cases}b shows another case where multiple limb joints are constrained at once in a way that violates the human skeleton, so the model produces an unrealistic pose in an attempt to satisfy all targets. A potential solution could be to incorporate a strong motion prior learned from realistic motion data~\cite{rempe2021humor,zhang2024rohm} into the control objective, so that implausible refinements are discouraged under noisy or inconsistent controls to further enhance the physical plausibility of the human motions. We consider this a promising direction for future work to develop a more robust motion control model.

\end{document}